\newcommand{\printfnsymbol}[1]{%
  \textsuperscript{\@fnsymbol{#1}}%
}
\DeclareMathOperator*{\E}{{\mathbb{E}}}
\newcommand{\paraspace}{\vspace{-2.5mm}}
\newcommand{\figref}[1]{Figure~\ref{fig:#1}}
\newcommand{\tabref}[1]{Table~\ref{tab:#1}}
\newcommand{\secref}[1]{Sec.~\ref{sec:#1}}
\newcommand{\thmref}[1]{Theorem~\ref{thm:#1}}
\newcommand{\lemref}[1]{Lemma~\ref{lem:#1}}
\newcommand{\appref}[1]{Appendix~\ref{sec:#1}}
\newcommand{\algref}[1]{Algorithm~\ref{alg:#1}}
\newcommand{\eq}[1]{\eqref{eq:#1}}
\newlength\savedwidth
\newcommand\whline[1]{\noalign{\global\savedwidth\arrayrulewidth
                               \global\arrayrulewidth #1} %
                      \hline
                      \noalign{\global\arrayrulewidth\savedwidth}}
\title{Rethinking Generative Mode Coverage:\\ A Pointwise Guaranteed Approach}
\author{%
  Peilin Zhong\thanks{equal contribution} 
  \qquad
  Yuchen Mo\printfnsymbol{1} 
  \qquad
  Chang Xiao\printfnsymbol{1} 
  \qquad
  Pengyu Chen 
  \qquad
  Changxi Zheng \vspace{1.5mm}\\
  Columbia University \\
  \texttt{\{peilin, chang, cxz\}@cs.columbia.edu} \\
  \texttt{\{yuchen.mo, pengyu.chen\}@columbia.edu}\\
}
\begin{document}

\maketitle

\vspace{-5mm}
\begin{abstract}
\vspace{-2.5mm}
Many generative models have to combat \emph{missing modes}.
The conventional wisdom to this end is by reducing through training
a statistical distance (such as $f$-divergence) between the generated distribution
and provided data distribution.
But this is more of a heuristic than a guarantee.
The statistical distance measures a \emph{global}, but not \emph{local}, similarity 
between two distributions. Even if it is small,  
it does not imply a plausible mode coverage.
Rethinking this problem from a game-theoretic perspective, 
we show that a complete mode coverage is firmly attainable.
If a generative model can approximate a data distribution moderately well under
a global statistical distance measure, then we will be able to find a mixture
of generators that collectively covers \emph{every} data point and thus
\emph{every} mode, with a lower-bounded generation probability. 
Constructing the generator mixture has a connection to the multiplicative weights 
update rule, upon which we propose our algorithm.
We prove that our algorithm guarantees complete mode coverage.
And our experiments on real and synthetic datasets confirm
better mode coverage over recent approaches, ones that also use generator mixtures 
but rely on global statistical distances.
\end{abstract}

\section{Introduction}\label{sec:intro}




A major pillar of machine learning, the \emph{generative} approach aims at
learning a data distribution from a provided training dataset.  While
strikingly successful, many generative models suffer from \emph{missing modes}.
Even after a painstaking training process, the generated samples represent only a
limited subset of the modes in the target data distribution,
yielding a much lower entropy distribution. 

Behind the missing mode problem is the conventional wisdom of training a generative model.
Formulated as an optimization problem, the training process reduces a statistical 
distance between the generated distribution and the target data distribution. 
The statistical distance, such as $f$-divergence or Wasserstein distance, is often a
\emph{global} measure. It evaluates an integral of the discrepancy between 
two distributions over the data space (or a summation over a discrete dataset). 
In practice, reducing the global statistical distance to a perfect zero 
is virtually a mission impossible. 
Yet a small statistical distance does not certify the generator complete mode coverage.
The generator may neglect underrepresented modes---ones that are less frequent in data space---in
exchange for better matching the distribution of well represented modes,
thereby lowering the statistical distance. 
In short, a global statistical distance is not ideal for promoting mode coverage
(see \figref{motivation} for a 1D motivating example and later \figref{sin_comp} for examples of a few classic generative models).



This inherent limitation 
is evident in various types of generative models (see \appref{other_types} for the analysis of a few classic generative models). 
Particularly in generative adversarial networks (GANs), mode collapse 
has been known as a prominent issue.
Despite a number of recent improvements 
toward alleviating it~\cite{salimans2016improved,metz2016unrolled,srivastava2014dropout,x2018bourgan,chen2016infogan,bowman2016generating}, 
none of them offers a complete mode coverage.
In fact, even the fundamental question remains unanswered: \emph{what precisely does a complete mode coverage mean?}
After all, the definition of ``modes'' in a 
dataset is rather vague, depending on what specific distance metric is used
for clustering data items (as discussed and illustrated in~\cite{x2018bourgan}).

\begin{wrapfigure}[23]{r}{0.468\textwidth}
  \centering
  \vspace{-5mm}
  \includegraphics[width=0.468\textwidth]{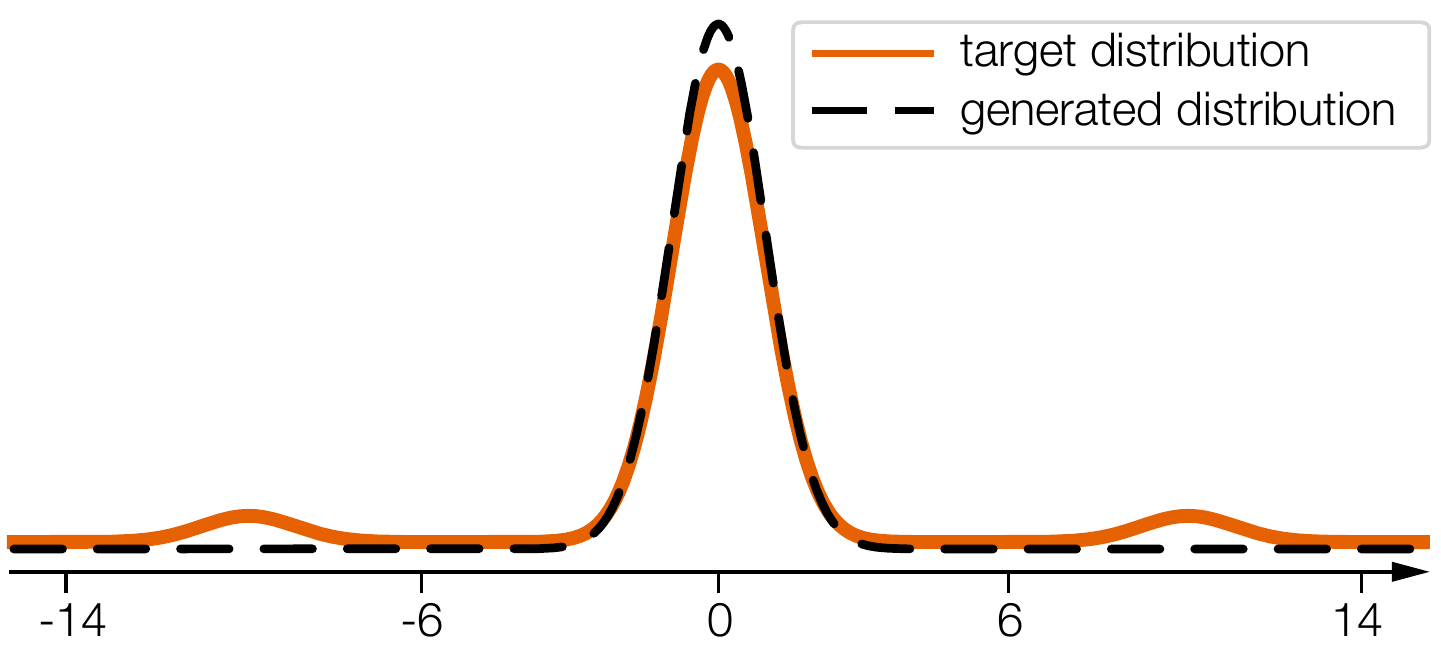}
  \vspace{-7.5mm}
  \caption{ \textbf{Motivating example.} 
Consider a 1D target distribution $P$ with three modes, i.e., a mixture of three Gaussians,
$P = 0.9\cdot\mathcal{N}(0,1)+0.05\cdot\mathcal{N}(10,1)+0.05\cdot\mathcal{N}(-10,1)$
(solid orange curve).
If we learn this distribution using a single Gaussian $Q$ (black dashed curve).
The statistical distance between the two is small: $D_{\textrm{TV}}(Q\parallel P)\leq 0.1$
and $D_{\textrm{KL}}(Q\parallel P)\leq 0.16$.
The probability of drawing samples from the side modes (in $[-14,-6]$ and $[6,14]$) of the target distribution 
$P$ is $\Pr_{x\sim P}[6\leq |x|\leq 14]\approx 0.1$, but
the probability of generating samples from $Q$ in the same intervals is
$\Pr_{x\sim Q}[6\leq |x|\leq 14]\approx 10^{-9}$. The side modes are missed!}
\label{fig:motivation}
\end{wrapfigure}
We introduce an explicit notion of complete mode coverage, 
by switching from the global statistical distance
to local \emph{pointwise} coverage:  provided a target
data distribution $P$ with a probability density $p(x)$ at each point $x$ of
the data space $\mathcal{X}$, we claim that a generator $G$ has a \emph{complete mode coverage} of $P$
if the generator's probability $g(x)$ for generating $x$ is pointwise lower bounded, that is,
\begin{equation}\label{eq:pointwise}
    g(x) \ge \psi\cdot p(x), \forall x\in\mathcal{X},
\end{equation}
for a reasonably large relaxation constant $\psi\in(0,1)$. 
This notion of mode coverage ensures that \emph{every point} $x$ in the data space $\mathcal{X}$
will be generated by $G$ with a finite and lower-bounded probability $g(x)$.
Thereby, in contrast to the generator trained by reducing a global statistical
distance (recall \figref{motivation}), no mode will have an arbitrarily small generation 
probability, and thus no mode will be missed.
Meanwhile, our mode coverage notion~\eq{pointwise} stays compatible with the conventional heuristic toward reducing a global 
statistical distance, as the satisfaction of \eq{pointwise} 
implies that the total variation distance between $P$ and $G$ is upper bounded by $1-\psi$
(see a proof in Appendix~\ref{sec:local_compatible_global}).

At first sight, the pointwise condition~\eq{pointwise} seems more stringent than reducing a global statistical distance, and
pursuing it might require a new formulation of generative models.
Perhaps somewhat surprisingly, a rethink from a game-theoretic perspective
reveal that this notion of mode coverage is viable without formulating any new models.
Indeed, a mixture of existing generative models (such as GANs) suffices. 
In this work, we provide an algorithm for constructing the generator mixture and a theoretical 
analysis showing the guarantee of our mode coverage notion~\eq{pointwise}.

\subsection{A Game-Theoretic Analysis}\label{sec:game}
Before delving into our algorithm, we offer an intuitive view of \emph{why 
our mode coverage notion~\eq{pointwise} is attainable} through a game-theoretic lens.
Consider a two-player game between Alice and Bob:  
given a target data distribution $P$ and a family $\mathcal{G}$ of
generators\footnote{An example of the generator family is the GANs.  The definition will be made clear later in this paper.},
Alice chooses a generator $G\in\mathcal{G}$, 
and Bob chooses a data point
$x\in\mathcal{X}$. If the probability density $g(x)$ of Alice's $G$ generating Bob's choice of $x$
satisfies $g(x)\ge\frac{1}{4}p(x)$,
the game produces a value
$v(G,x)=1$, otherwise it produces $v(G,x)=0$.
Here $\nicefrac{1}{4}$ is used purposely as an example to concretize our intuition.
Alice's goal is to maximize the game value, while Bob's goal is to minimize the game value.

Now, consider two situations. In the first situation, Bob first chooses a mixed strategy, that
is, a distribution $Q$ over $\mathcal{X}$. 
Then, Alice chooses the best generator $G\in\mathcal{G}$ according to Bob's
distribution $Q$. When the game starts, Bob samples a point $x$ using his choice of distribution $Q$.
Together with Alice's choice $G$, the game produces a value. 
Since $x$ is now a random variable over $Q$,
the expected game value is $\max_{G\in\mathcal{G}}\underset{x\sim Q}{\E}[v(G,x)]$.  
In the second situation, Alice first chooses a mixed strategy, that is,
a distribution $R_\mathcal{G}$ of generators over $\mathcal{G}$.
Then, given Alice's choice $R_\mathcal{G}$,
Bob chooses the best data point $x\in\mathcal{X}$.
When the game starts, Alice samples a generator $G$ from the chosen distribution $R_\mathcal{G}$.
Together with Bob's choice of $x$, the game produces a value, and the expected value
is $\min_{x\in \mathcal{X}}{\E}_{G\sim R_{\mathcal{G}}}[v(G,x)]$.

According to von Neumann's minimax theorem~\cite{neumann1928theorie, du2013minimax}, 
Bob's optimal expected value in the first situation must be the same as
Alice's optimal value in the second situation:
\begin{equation}\label{eq:minimax}
\min_Q \max_{G\in\mathcal{G}}\underset{x\sim Q}{\E}[v(G,x)]=
\max_{R_\mathcal{G}}\min_{x\in \mathcal{X}}\underset{G\sim R_{\mathcal{G}}}{\E}[v(G,x)].
\end{equation} 
With this equality realized,
our agenda in the rest of the analysis is as follows. First, we show a lower bound of the
left-hand side of~\eq{minimax}, and then we use the right-hand side to reach
the lower-bound of $g(x)$ as in~\eq{pointwise}, for Alice's generator $G$.
To this end, we need to depart off from the current game-theoretic analysis and discuss
the properties of existing generative models for a moment.

Existing generative models such as
GANs~\cite{goodfellow2014generative,salimans2016improved,arjovsky2017wasserstein}
aim to reproduce arbitrary data distributions. 
While it remains intractable to have the generated distribution
match \emph{exactly} the data distribution,
the approximations are often plausible.
One reason behind the plausible performance is that the data space 
encountered in practice is ``natural'' and 
restricted---all English sentences or
all natural object images or all images on a manifold---but not a space of arbitrary data.  
Therefore, it is reasonable to expect the generators 
in $\mathcal{G}$ (e.g., all GANs) to meet the following requirement%
\footnote{This requirement is weaker than the mainstream goal of generative models,
which all aim to approximate a target data distribution as closely as possible.
Here we only require the approximation error is upper bounded.}
(without conflicting the no-free-lunch theorem~\cite{wolpert1997no}): 
for any distribution $Q$ over a natural data space $\mathcal{X}$ encountered in practice, there exists a generator $G\in\mathcal{G}$
such that the total variation distance between $G$ and $Q$ is upper bounded by a constant 
$\gamma$, that is, $\frac{1}{2}\int_{\mathcal{X}}\left|q(x)-g(x)\right|\mathrm{d} x \leq
\gamma$, where $q(\cdot)$ and $g(\cdot)$ are the probability densities on $Q$ and the generated samples of $G$,
respectively. Again as a concrete example, we use $\gamma=0.1$.
With this property in mind, we now go back to our game-theoretic analysis.

Back to the first situation described above.
Once Bob's distribution $Q$ (over $\mathcal{X}$) and Alice's generator $G$ are
identified, then given a target distribution $P$ over $\mathcal{X}$ and an $x$
drawn by Bob from $Q$, 
the probability of having Alice's $G$ cover $P$ (i.e., $g(x)\ge\frac{1}{4}p(x)$)  
at $x$ is lower bounded.
In our current example, we have the following lower bound:
\begin{equation}\label{eq:yao_example}
\Pr_{x\sim Q}\left[g(x)\geq 1/4\cdot p(x)\right]\geq 0.4. 
\end{equation}
Here $0.4$ is related to the total variation distance bound (i.e., $\gamma=0.1$)
between $G$ and $Q$, and this lower bound value is derived in \appref{proof_game}. 
Next, notice that on the left-hand side of~\eq{minimax}, the expected value,
${\E}_{x\sim Q}[v(G,x)]$, is equivalent to the probability in~\eq{yao_example}.
Thus, we have
\begin{equation}
\min_Q \max_{G\in\mathcal{G}}\underset{x\sim Q}{\E}[v(G,x)]\ge 0.4.
\end{equation}
Because of the equality in~\eq{minimax}, this is also the lower bound of its right-hand side,
from which we know that there exists a distribution $R_\mathcal{G}$
of generators such that for any $x\in\mathcal{X}$,
we have
\begin{equation}
\underset{G\sim R_{\mathcal{G}}}{\E}[v(G,x)]=
\Pr_{G\sim R_\mathcal{G}}\left[g(x)\geq 1/4\cdot p(x)\right] \geq 0.4.
\end{equation}
This expression shows that for any $x\in\mathcal{X}$, if we draw
a generator $G$ from $R_\mathcal{G}$, then with a probability at least $0.4$, 
$G$'s generation probability density satisfies $g(x)\ge\frac{1}{4}p(x)$. Thus, we can think 
$R_\mathcal{G}$ as a ``collective'' generator $\bm{G}^*$, or a \emph{mixture of generators}. When 
generating a sample $x$, we first choose a generator $G$ according to $R_\mathcal{G}$ and then
sample an $x$ using $G$. The overall probability $g^*(x)$ of generating $x$ 
satisfies $g^*(x)>0.1p(x)$---precisely the pointwise lower bound that we pose
in~\eq{pointwise}.

\paraspace
\paragraph{Takeaway from the analysis.}
This analysis reveals that a complete mode coverage is firmly viable.
Yet it offers no recipe on \emph{how} to construct the mixture of generators 
and their distribution $R_\mathcal{G}$ using existing generative models.
Interestingly, as pointed out by Arora et al.~\cite{arora2012multiplicative},
a constructive version of von Neumann's minimax theorem is related to
the general idea of multiplicative weights update. 
Therefore, our key contributions in this work are \textbf{i)} the design of a multiplicative 
weights update algorithm (in \secref{alg}) to construct a generator mixture,
and \textbf{ii)} a theoretical analysis showing that our generator mixture
indeed obtains the pointwise data coverage~\eq{pointwise}.
In fact, we only need a small number of generators to construct the mixture (i.e., it is easy to train),
and the distribution $R_\mathcal{G}$ for using the mixture is as simple as a uniform
distribution (i.e., it is easy to use).

\section{Related Work}\label{sec:rel}

There exists a rich set of works improving classic generative models for
alleviating missing modes, especially in the framework of GANs, by altering objective 
functions~\cite{che2016mode,zhao2016energy,mao2017least,arjovsky2017wasserstein,gulrajani2017improved,saatci2017bayesian},
changing training methods~\cite{heusel2017gans, brock2018large}, 
modifying neural network architectures~\cite{metz2016unrolled,dumoulin2016adversarially,lin2017pacgan,srivastava2017veegan,karras2017progressive},
or regularizing latent space distributions~\cite{x2018bourgan,NIPS2018_7846}.
The general philosophy behind these improvements is to reduce the statistical
distance between the generated distribution and target distribution by making
the models easier to train.  
Despite their technical differences, their optimization goals are all toward reducing a global
statistical distance.  

The idea of constructing a mixture of generators has been explored, with two ways of construction.
In the first way, a set of generators are trained simultaneously. For example, 
Locatello et al.~\cite{locatello2018clustering} used multiple generators,
each responsible for sampling a subset of data points decided in a 
k-means clustering fashion.
Other methods focus on the use of multiple
GANs~\cite{arora2017generalization,hoang2018mgan,park2018megan}.
The theoretical intuition behind these approaches is by viewing a GAN
as a two-player game and extending it to reach a Nash equilibrium 
with a mixture of generators~\cite{arora2017generalization}.
In contrast, our method does not depend specifically on GANs,
and our game-theoretic view is fundamentally different (recall \secref{game}).

Another way of training a mixture of generators takes a sequential approach. 
This is related to \emph{boosting} algorithms in machine learning.
Grnarova et al.~\cite{grnarova2017online} viewed the problem of training 
GANs as finding a mixed strategy in a zero-sum game, and used
the Follow-the-Regularized-Leader algorithm~\cite{hazan2016introduction}
for training a mixture of generators iteratively.
Inspired by AdaBoost~\cite{freund1997decision}, 
other approaches train a ``weak'' generator that fits a reweighted data
distribution in each iteration, and all iterations together form an additive mixture of
generators~\cite{wang2016ensembles,tolstikhin2017adagan} or a multiplicative
mixture of generators~\cite{grover2018boosted}. 

Our method can be also viewed as a boosting strategy.  From this perspective, the most related 
is AdaGAN~\cite{tolstikhin2017adagan}, while significant differences exist.
Theoretically, AdaGAN (and other boosting-like algorithms) is based on the assumption that
the reweighted data distribution in each iteration becomes progressively easier to learn.
It requires a generator in each iteration to have a statistical distance to the reweighted
distribution smaller than the previous iteration. As we will discuss in~\secref{experiment}, this
assumption is not always feasible.  We have no such assumption. Our method can use
a weak generator in each iteration.  If the generator is more expressive, the theoretical
lower bound of our pointwise coverage becomes larger (i.e., a larger $\psi$ in~\eq{pointwise}).
Algorithmically, our reweighting scheme is simple and different from AdaGAN, only doubling
the weights or leaving them unchanged in each iteration. 
Also, in our mixture of generators, they are treated uniformly, and no mixture 
weights are needed, whereas AdaGAN needs a set of weights that are heuristically chosen.

To summarize,
in stark contrast to all prior methods, our approach is rooted in a different
philosophy of training generative models. 
Rather than striving for reducing a global statistical distance,
our method revolves around an explicit notion of complete mode coverage as defined in~\eq{pointwise}.
Unlike other boosting algorithms, our algorithm of constructing the mixture of generators
guarantees complete mode coverage, and this guarantee
is theoretically proved.



\vspace{-0.05in}
\section{Algorithm}\label{sec:alg}
\vspace{-0.05in}
\paragraph{A mixture of generators.} 
Provided a target distribution $P$ on a data domain $\mathcal{X}$, 
we train a mixture of generators to pursue pointwise mode coverage~\eq{pointwise}.
Let $\bm{G}^*=\{G_1,\dots,G_T\}$ denote the resulting mixture of $T$ generators.
Each of them ($G_t,t=1...T$) may use any existing generative model such as GANs.
Existing methods that also rely on a mixture of generators associate each generator a nonuniform weight $\alpha_t$
and choose a generator for producing a sample randomly based on the weights.
Often, these weights are chosen heuristically, e.g., in AdaGAN~\cite{tolstikhin2017adagan}.
Our mixture is conceptually and computationally simpler. Each generator is treated equally.
When using $\bm{G}^*$ to generate a sample, we first choose a generator $G_i$ uniformly
at random, and then use $G_i$ to generate the sample.

\paraspace
\vspace{-0.5mm}
\paragraph{Algorithm overview.}
Our algorithm of training $\bm{G}^*$ can be understood as a specific rule design in 
the framework of multiplicative weights update~\cite{arora2012multiplicative}. 
Outlined in \algref{mwu_training}, it runs iteratively.
In each iteration, a generator $G_t$ is trained using an updated data distribution $P_t$
(see Line 6-7 of \algref{mwu_training}).
The intuition here is simple: if in certain data domain regions the current generator fails
to cover the target distribution sufficiently well, then we update the data distribution to emphasize
those regions for the next round of generator training (see Line 9 of \algref{mwu_training}).
In this way, each generator can focus on the data distribution in individual data regions.
Collectively, they are able to cover the distribution over the entire data domain,
and thus guarantee pointwise data coverage.

\paraspace
\vspace{-0.5mm}
\paragraph{Training.}
Each iteration of our algorithm trains an individual generator $G_t$,
for which many existing generative models, such as GANs~\cite{goodfellow2014generative}, 
can be used. The only
prerequisite is that $G_t$ needs to be trained to approximate the data
distribution $P_t$ moderately well. 
This requirement arises from our 
game-theoretic analysis (\secref{game}), wherein the total variation distance
between $G_t$'s distribution and $P_t$ needs to be upper bounded. 
Later in our theoretical analysis (\secref{analysis}),
we will formally state this requirement, 
which, in practice, is easily satisfied by most existing generative models.

\begin{algorithm}[t]
    \caption{Constructing a mixture of generators}
    \label{alg:mwu_training}
    \begin{algorithmic}[1]
        \STATE {\bfseries Parameters:} $T$, a positive integer number of generators, 
        and $\delta\in(0,1)$, a covering threshold. 
        \STATE {\bfseries Input:} a target distribution $P$ on a data domain $\mathcal{X}$.
        \STATE For each $x\in \mathcal{X}$, initialize its weight $w_1(x)=p(x)$.  
        \FOR{$t = 1\rightarrow T$}
        \STATE Construct a distribution $P_t$ over $\mathcal{X}$ as follows:
        \STATE For every $x\in\mathcal{X}$, normalize the probability density $p_t(x)=\frac{w_t(x)}{W_t}$,
        where $W_t=\int_{\mathcal{X}} w_t(x) \mathrm{d}x$.
        \STATE Train a generative model $G_t$ on the distribution $P_t$. \label{sta:train_generator}
        \STATE Estimate generated density $g_t(x)$ for every $x\in\mathcal{X}$. 
        \STATE For each $x\in \mathcal{X}$, if $g_t(x)<\delta\cdot p(x)$, set $w_{t+1}(x) = 
        2\cdot w_t(x)$. Otherwise, set $w_{t+1}(x)=w_t(x)$.
        \ENDFOR
        \STATE {\bfseries Output:} a mixture of generators $\bm{G}^*=\{G_1,\ldots,G_T\}$.
    \end{algorithmic}
\end{algorithm}

\paraspace
\vspace{-0.5mm}
\paragraph{Estimation of generated probability density.}
In Line 8 of \algref{mwu_training}, we need to estimate the probability $g_t(x)$ of the 
current generator sampling a data point $x$. 
Our estimation follows the idea of adversarial training, similar to AdaGAN~\cite{tolstikhin2017adagan}. 
First, we train a discriminator $D_t$ to distinguish between samples from $P_t$ and 
samples from $G_t$. The optimization objective of $D_t$ is defined as
\begin{equation*}
\max_{D_t} \E_{x\sim P_t}[\log D_t(x)] + \E_{x\sim G_t}[\log(1-D_t(x))].
\end{equation*}
Unlike AdaGAN~\cite{tolstikhin2017adagan}, here $P_t$ is the currently updated data distribution,
not the original target distribution, and $G_t$ is the generator trained in the current
round, not a mixture of generators in all past rounds. 
As pointed out previously~\cite{nowozin2016f,tolstikhin2017adagan}, once $D_t$ is optimized,
we have $D_t(x) = \frac{p_t(x)}{p_t(x) + g_t(x)}$ for all $x\in\mathcal{X}$, and
equivalently $\frac{g_t(x)}{p_t(x)} = \frac{1}{D_t(x)} - 1$. 
Using this property in Line 9 of \algref{mwu_training} (for testing the data
coverage), we rewrite the condition $g_t(x)<\delta\cdot p(x)$ as
\begin{equation*}
    \frac{g_t(x)}{p(x)} = \frac{g_t(x)}{p_t(x)}\frac{p_t(x)}{p(x)}
    = \left(\frac{1}{D_t(x)} - 1\right)\frac{w_t(x)}{p(x)W_t} < \delta,
\end{equation*}
where the second equality utilize the evaluation of $p_t(x)$ in Line 6 
(i.e., $p_t(x) = \nicefrac{w_t(x)}{W_t}$).

Note that if the generators $G_t$ are GANs, then the discriminator of each $G_t$ can be reused 
as $D_t$ here. Reusing $D_t$ introduces no additional computation. In contrast, 
AdaGAN~\cite{tolstikhin2017adagan} always has to train an additional discriminator 
$D_t$ in each round using the mixture of generators of all past rounds.

\paraspace
\paragraph{Working with empirical dataset.}
In practice, the true data distribution $P$ is often unknown
when an empirical dataset $\mathbb{X}=\{x_i\}_{i=1}^n$ is given.
Instead, the empirical dataset 
is considered as $n$ i.i.d. samples drawn from $P$.
According to the Glivenko-Cantelli theorem \cite{cantelli1933sulla}, 
the uniform distribution over $n$ i.i.d. samples from $P$
will converge to $P$ as $n$ approaches to infinity. 
Therefore, provided the empirical dataset, we do not need to know the probability density $p(x)$ of $P$,
as every sample $x_i\in\mathbb{X}$ is considered to have a finite and uniform probability measure.
An empirical version of \algref{mwu_training} 
and more explanation are presented in the supplementary document (\algref{empirical} and 
\appref{alg_emp}).




\vspace{-0.05in}
\section{Theoretical Analysis}\label{sec:analysis}
\vspace{-0.05in}
We now provide a theoretical understanding of our algorithm, 
showing that the pointwise data coverage~\eq{pointwise} is indeed obtained.
Our analysis also sheds some light on how to choose the parameters of \algref{mwu_training}.

\vspace{-0.04in}
\subsection{Preliminaries}\label{sec:pre}
\vspace{-0.04in}
We first clarify a few notational conventions and introduce two new theoretical notions for our
subsequent analysis.
Our analysis is in continuous setting; results on discrete datasets follow 
directly.

\paraspace
\paragraph{Notation.}
Formally, we consider a $d$-dimensional measurable space
$\left(\mathcal{X},\mathcal{B}(\mathcal{X})\right)$, where $\mathcal{X}$ is
the $d$-dimensional data space, and $\mathcal{B}(\mathcal{X})$ is the Borel
$\sigma$-algebra over $\mathcal{X}$ to enable probability measure.
We use a capital letter (e.g., $P$) to denote a probability measure on 
this space. 
When there is no ambiguity, we also refer them as probability distributions 
(or distributions).
For any subset $\mathcal{S}\in\mathcal{B}(\mathcal{X})$, the probability 
of $\mathcal{S}$ under $P$ is $P(\mathcal{S})\coloneqq \Pr_{x\sim P}[x\in \mathcal{S}]$.
We use $G$ to denote a generator. When there is no ambiguity, $G$ 
also denotes the distribution of its generated samples.
All distributions are assumed absolutely continuous. 
Their probability density functions (i.e., the derivative with respect to the Lebesgue
measure) are referred by their corresponding lowercase letters (e.g.,
$p(\cdot)$, $q(\cdot)$, and $g(\cdot)$). 

Moreover, we use $[n]$ to denote the set $\{1,2,...,n\}$,
$\mathbb{N}_{>0}$ for the set of all positive integers,
and $\mathds{1}( \mathcal{E})$ 
for the indicator function whose value is $1$ if the event $\mathcal{E}$ happens, and 
$0$ otherwise.

\paraspace
\paragraph{$f$-divergence.}
Widely used in objective functions of training generative models,
$f$-divergence is a statistical distance between two distributions. 
Let $P$ and $Q$ be two distributions over $\mathcal{X}$.
Provided a convex function $f$ on $(0,\infty)$ such that $f(1) = 0$, 
$f$-divergence of $Q$ from $P$ is defined as
$
D_f(Q\parallel P)\coloneqq\int_{\mathcal{X}} f\left(\frac{q(x)}{p(x)}\right) p(x) \mathrm{d}x.
$
Various choices of $f$ lead to some commonly used 
$f$-divergence metrics such as total variation distance $D_{\textrm{TV}}$,
Kullback-Leibler divergence $D_{\textrm{KL}}$, Hellinger distance $D_{\textrm{H}}$,
and Jensen-Shannon divergence $D_{\textrm{JS}}$~\cite{nowozin2016f,amari2016information}.
Among them, total variation distance is upper bounded by many other $f$-divergences.
For instance, $D_{\textrm{TV}}(Q\parallel P)$ is upper bounded by
$\sqrt{\frac{1}{2}D_{\textrm{KL}}(Q\parallel P)}$, 
$\sqrt{2}D_{\textrm{H}}(Q\parallel P),$ and $\sqrt{2D_{\textrm{JS}}(Q\parallel P)}$, respectively.
Thus, if two distributions are close under those $f$-divergence measures, so are they under
total variation distance. For this reason, our theoretical analysis is based on the total
variation distance.

\paraspace
\paragraph{$\delta$-cover and $(\delta,\beta)$-cover.}
We introduce two new notions for analyzing our algorithm.
The first is the notion of \emph{$\delta$-cover}.
Given a data distribution $P$ over $\mathcal{X}$ and a value $\delta\in(0,1]$, 
if a generator $G$ satisfies $g(x)\ge\delta\cdot p(x)$ at a data point $x\in\mathcal{X}$,
we say that $x$ is $\delta$-covered by $G$ under distribution $P$.
Using this notion, the pointwise mode coverage~\eq{pointwise} states that $x$ is $\psi$-covered
by $G$ under distribution $P$ for all $x\in\mathcal{X}$.
We also extend this notion to a measurable subset
$\mathcal{S}\in\mathcal{B}(\mathcal{X})$:
we say that $\mathcal{S}$ is $\delta$-covered by $G$ under distribution $P$
if $G(\mathcal{S})\ge\delta\cdot P(\mathcal{S})$ is satisfied.

Next, consider another distribution $Q$ over $\mathcal{X}$.
We say that $G$ can \emph{$(\delta,\beta)$-cover} $(P,Q)$, if 
the following condition holds:
\begin{equation}\label{eq:cover_def}
\Pr_{x\sim Q}[x \text{ is $\delta$-covered by } G \text{ under distribution} P]\geq \beta.
\end{equation}
For instance, using this notation, Equation~\eqref{eq:yao_example} in 
our game-theoretic analysis states that $G$ can $(0.25,0.4)$-cover $(P,Q)$.

\vspace{-0.04in}
\subsection{Guarantee of Pointwise Data Coverage}\label{sec:coverage}
\vspace{-0.04in}
In each iteration of \algref{mwu_training}, we expect the generator $G_t$ to
approximate the given data distribution $P_t$ sufficiently well. We now formalize this
expectation and understand its implication. Our intuition is that by finding
a property similar to \eq{yao_example}, we should be able to establish a pointwise coverage
lower bound in a way similar to our analysis in \secref{game}. 
Such a property is given by the following lemma (and proved in \appref{proof_lem1}).
\vspace{-0.5mm}
\begin{restatable}{lemma}{lemtotalvar}\label{lem:continuous_total_variation_distance}
    Consider two distributions, $P$ and $Q$, over the data space $\mathcal{X}$,
    and a generator $G$ producing samples in $\mathcal{X}$.
For any $\delta,\gamma\in (0,1]$, if $D_{TV}(G\parallel Q)\leq \gamma,$ then $G$
can $(\delta,1-2\delta-\gamma)$-cover $(P,Q)$.
\end{restatable}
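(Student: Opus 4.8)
The plan is to reduce the statement to a single measure-transfer step. Writing the event that $x$ fails to be $\delta$-covered as the measurable \emph{bad set} $B \coloneqq \{x \in \mathcal{X} : g(x) < \delta\, p(x)\}$, the assertion that $G$ can $(\delta, 1-2\delta-\gamma)$-cover $(P,Q)$ is, by the definition in~\eqref{eq:cover_def}, exactly the bound $Q(B) \le 2\delta + \gamma$, since $\Pr_{x\sim Q}[x\text{ is }\delta\text{-covered by }G\text{ under }P] = 1 - Q(B)$. So everything comes down to controlling the $Q$-mass of $B$, and the natural route is to first bound its $G$-mass and then transport that bound to $Q$ using the total variation hypothesis.

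The key observation is that $B$ should be measured under $G$, not under $P$ or $Q$. On $B$ the defining inequality gives $g(x) < \delta\, p(x)$ pointwise, so integrating over $B$ yields $G(B) = \int_B g(x)\,\mathrm{d}x \le \delta \int_B p(x)\,\mathrm{d}x = \delta\, P(B)$. Since $P$ is a probability measure we have $P(B) \le 1$, and hence $G(B) \le \delta$. This is the only place the normalization $\int_{\mathcal{X}} p = 1$ enters, and it is precisely what converts the pointwise failure condition into a global budget limiting how much $G$-mass the uncovered region can carry.

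The second step transfers this budget from $G$ to $Q$. The density-form definition $D_{TV}(G\parallel Q) = \tfrac12\int_{\mathcal{X}} |g-q|\,\mathrm{d}x$ implies the set-wise inequality $|G(S) - Q(S)| \le D_{TV}(G\parallel Q)$ for every $S \in \mathcal{B}(\mathcal{X})$ (the supremum of $G(S)-Q(S)$ over measurable $S$ is attained on $\{g \ge q\}$ and equals $D_{TV}$). Applying this to $S = B$ gives $Q(B) \le G(B) + D_{TV}(G\parallel Q) \le \delta + \gamma$, and therefore $\Pr_{x\sim Q}[x\text{ is }\delta\text{-covered by }G\text{ under }P] = 1 - Q(B) \ge 1 - \delta - \gamma \ge 1 - 2\delta - \gamma$, which is the claimed $(\delta, 1-2\delta-\gamma)$-cover.

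I expect no deep obstacle: the proof is two short estimates glued together by the set-wise form of the total variation distance. The one point deserving care is justifying that set-wise inequality from the density definition the paper adopts (and noting that $B$ is measurable, which is immediate as $g$ and $p$ are densities). As a sanity check, this route in fact delivers the slightly stronger constant $1-\delta-\gamma$, so the stated bound holds with room to spare; specializing to the running example $\delta=\tfrac14$, $\gamma=0.1$ it recovers the value $0.4$ used in~\eq{yao_example}.
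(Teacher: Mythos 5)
Your proof is correct, and it takes a genuinely different---and in fact sharper---route than the paper's. The paper lower-bounds $\Pr_{x\sim Q}[g(x)\ge\delta\, p(x)]$ by intersecting with the auxiliary event $\{q(x)\ge\delta\, p(x)\}$: it pays $\delta$ once for $\int_{\mathcal{X}} \mathds{1}(q<\delta p)\, q\,\mathrm{d}x \le \delta$, then on the residual set $\{q\ge\delta p>g\}$ splits $q=(q-g)+g$ and pays $\gamma$ for the first piece (via the total variation hypothesis) and a second $\delta$ for $\int_{\mathcal{X}} \mathds{1}(\delta p>g)\, g\,\mathrm{d}x\le\delta$, arriving at exactly $1-2\delta-\gamma$. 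You never compare $q$ with $\delta p$ at all: you bound the $G$-mass of the bad set $B=\{g<\delta p\}$ by $G(B)\le\delta P(B)\le\delta$ and transfer this to $Q$ via the set-wise inequality $Q(B)\le G(B)+D_{TV}(G\parallel Q)$, losing only one $\delta$. That set-wise inequality does follow from the paper's density definition, precisely because both $G$ and $Q$ are probability measures: $Q(B)-G(B)=\int_B (q-g)\,\mathrm{d}x\le\int_{\mathcal{X}}\mathds{1}(q>g)(q-g)\,\mathrm{d}x=\tfrac{1}{2}\int_{\mathcal{X}}|q-g|\,\mathrm{d}x$, so your one point of care is handled. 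The payoff of your route is the strictly stronger conclusion that $G$ can $(\delta,1-\delta-\gamma)$-cover $(P,Q)$; propagated through Lemma~\ref{lem:continuous_coverage_guarantee}, this would improve the bound in Theorem~\ref{thm:main} from $(1-(\gamma+2\delta)/\ln 2-\eta)\delta$ to $(1-(\gamma+\delta)/\ln 2-\eta)\delta$, which in turn improves the optimal choice of $\delta$ and the constant $\psi$ in~\eqref{eq:pointwise} (for instance, the $0.4$ in~\eqref{eq:yao_example} becomes $0.65$). What the paper's longer computation buys is mainly uniformity: the same chain of indicator estimates reappears in Appendix~\ref{sec:proof_game} and Appendix~\ref{sec:choice_delta}, so its constants are consistent across the game-theoretic discussion, whereas adopting your argument would (beneficially) force updating those constants throughout.
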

\vspace{-1mm}
Intuitively, when $G$ and $Q$ are identified, $\gamma$ is set.
If $\delta$ is reduced, then more data points in $\mathcal{X}$
can be $\delta$-covered by $G$ under $P$. Thus, the probability defined
in~\eq{cover_def} becomes larger, as reflected by the increasing $1-2\delta-\gamma$.
On the other hand, consider a fixed $\delta$. As the discrepancy between $G$ and $Q$ 
becomes larger, $\gamma$ increases. Then, sampling an $x$ according to  
$Q$ will have a smaller chance to land at a point that is $\delta$-covered by $G$
under $P$, as reflected by the decreasing $1-2\delta-\gamma$.

Next, we consider \algref{mwu_training} and identify a sufficient condition
under which the output mixture of generators $\bm{G}^*$ covers every data point
with a lower-bounded guarantee (i.e., our goal~\eq{pointwise}).
Simply speaking, this sufficient condition is as follows:
in each round $t$, the generator $G_t$ is trained such that
given an $x$ drawn from distribution $P_t$, the probability of $x$ being 
$\delta$-covered by $G_t$ under $P$ is also lower bounded.
A formal statement is given in the next lemma (proved in Appendix~\ref{sec:proof_lem2}).
\vspace{-0.5mm}
\begin{restatable}{lemma}{lemcontinuouscoverageguarantee}\label{lem:continuous_coverage_guarantee}
Recall that $T\in\mathbb{N}_{>0}$ and $\delta\in(0,1)$ are the input parameters of \algref{mwu_training}.
For any $\varepsilon \in [0,1)$ and any measurable subset $\mathcal{S}\in\mathcal{B}(\mathcal{X})$
whose probability measure satisfies $P(\mathcal{S})\ge 1/2^{\eta T}$ with some $\eta\in(0,1)$,
if in every round $t\in[T]$, $G_t$ can $(\delta,1-\varepsilon)$-cover $(P,P_t)$,
then the resulting mixture of generators $\bm{G}^*$ can
$(1-{\varepsilon}/{\ln 2}-\eta)\delta$-cover $\mathcal{S}$ under distribution $P$.
\end{restatable}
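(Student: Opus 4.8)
The plan is to track two quantities through the $T$ rounds of \algref{mwu_training}: the total weight $W_t$, and, for each point $x$, the number of times its weight is doubled. First I would unpack the hypothesis. Writing $U_t=\{x\in\mathcal{X}:g_t(x)<\delta\,p(x)\}$ for the set of points left uncovered by $G_t$, the assumption that $G_t$ can $(\delta,1-\varepsilon)$-cover $(P,P_t)$ says exactly that $P_t(U_t)\le\varepsilon$, i.e.\ $\int_{U_t}w_t(x)\,\mathrm{d}x\le\varepsilon\,W_t$ since $p_t=w_t/W_t$. Because Line~9 doubles the weight precisely on $U_t$ and leaves it unchanged elsewhere, this yields the recursion $W_{t+1}=W_t+\int_{U_t}w_t(x)\,\mathrm{d}x\le(1+\varepsilon)W_t$. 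Iterating from $W_1=\int_\mathcal{X} p=1$ gives the global weight budget $W_{T+1}\le(1+\varepsilon)^T$.

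Next I would extract a pointwise lower bound on the mixture density. Since $\bm{G}^*$ picks a generator uniformly, $g^*(x)=\frac{1}{T}\sum_{t=1}^T g_t(x)$. Let $C(x)$ denote the number of rounds $t\in[T]$ in which $x\in U_t$ (equivalently, the number of doublings), so that $w_{T+1}(x)=2^{C(x)}p(x)$ and $0\le C(x)\le T$. Discarding the uncovered rounds (where $g_t\ge 0$ suffices) and using $g_t(x)\ge\delta\,p(x)$ on the remaining $T-C(x)$ covered rounds gives $g^*(x)\ge\bigl(1-C(x)/T\bigr)\delta\,p(x)$, a bound that stays nonnegative. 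Integrating over $\mathcal{S}$,
\[
\bm{G}^*(\mathcal{S})\ge\delta\,P(\mathcal{S})-\frac{\delta}{T}\int_{\mathcal{S}}C(x)\,p(x)\,\mathrm{d}x,
\]
so everything reduces to controlling the average doubling count $\int_{\mathcal{S}}C(x)\,p(x)\,\mathrm{d}x$.

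The key step, and the one I expect to be the crux, is bounding this integral using the weight budget. The trick is to write $C(x)=\log_2\!\bigl(w_{T+1}(x)/p(x)\bigr)$ and apply Jensen's inequality with respect to $P$ conditioned on $\mathcal{S}$, exploiting the concavity of $\log$. This turns the average of $C(x)$ into the logarithm of a $p$-weighted average of $w_{T+1}(x)/p(x)$, which collapses to $\frac{1}{P(\mathcal{S})}\int_{\mathcal{S}}w_{T+1}\le W_{T+1}/P(\mathcal{S})$. Combined with the weight budget, this gives $\int_{\mathcal{S}}C(x)\,p(x)\,\mathrm{d}x\le P(\mathcal{S})\bigl[T\log_2(1+\varepsilon)-\log_2 P(\mathcal{S})\bigr]$. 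The only subtlety to verify is the integrability needed for Jensen, which is immediate since $0\le C(x)\le T$ is bounded.

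Finally I would plug in the two hypotheses to recover the stated constant. Using $P(\mathcal{S})\ge 2^{-\eta T}$ gives $-\log_2 P(\mathcal{S})\le\eta T$, and the elementary inequality $\ln(1+\varepsilon)\le\varepsilon$ gives $\log_2(1+\varepsilon)\le\varepsilon/\ln 2$; together these bound the integral by $P(\mathcal{S})\,T\,(\varepsilon/\ln 2+\eta)$. Substituting into the displayed inequality for $\bm{G}^*(\mathcal{S})$ cancels the factor $T$ and yields $\bm{G}^*(\mathcal{S})\ge(1-\varepsilon/\ln 2-\eta)\delta\,P(\mathcal{S})$, which is exactly the assertion that $\bm{G}^*$ can $(1-\varepsilon/\ln 2-\eta)\delta$-cover $\mathcal{S}$ under $P$.
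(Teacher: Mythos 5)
Your proposal is correct and follows essentially the same route as the paper's proof: the weight-budget bound $W_{T+1}\le(1+\varepsilon)^T$, Jensen's inequality (concavity of $\log$, equivalently convexity of $2^y$) to bound the conditional expected doubling count $\E_{x\sim P}[C(x)\mid x\in\mathcal{S}]$ by $\log_2\bigl(W_{T+1}/P(\mathcal{S})\bigr)$, and the decomposition $g^*(x)\ge\bigl(1-C(x)/T\bigr)\delta\,p(x)$ integrated over $\mathcal{S}$. The only differences are presentational (you bound the mixture probability before controlling the doubling count, and apply Jensen to $\log$ of the weight ratio rather than to the exponential of the count), so there is nothing substantive to add.
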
 
\vspace{-1mm}
This lemma is about lower-bounded coverage of a measurable subset $\mathcal{S}$,
not a point $x\in\mathcal{X}$. At first sight, it is not of the exact form 
in~\eq{pointwise} (i.e., pointwise $\delta$-coverage).
This is because formally speaking it makes no sense to talk about covering probability
at a single point (whose measure is zero). But as $T$ approaches to $\infty$,
$\mathcal{S}$ that satisfies $P(\mathcal{S})\ge 1/2^{\eta T}$ can also approach
to a point (and $\eta$ approaches to zero). Thus, \lemref{continuous_coverage_guarantee} provides a condition for pointwise
lower-bounded coverage in the limiting sense. In practice, the provided dataset 
is always discrete, and the probability measure at each discrete data point
is finite. Then, \lemref{continuous_coverage_guarantee} is indeed
a sufficient condition for pointwise lower-bounded coverage. 

%

From \lemref{continuous_total_variation_distance}, 
we see that the condition posed by 
\lemref{continuous_coverage_guarantee} is indeed satisfied by
our algorithm, and combing both lemmas yields our final theorem (proved in Appendix~\ref{sec:proof_main}).
\vspace{-0.5mm}
\begin{restatable}{theorem}{maintheorem}\label{thm:main}
Recall that $T\in\mathbb{N}_{>0}$ and $\delta\in(0,1)$ are the input parameters of \algref{mwu_training}.
For any measurable subset $\mathcal{S}\in\mathcal{B}(\mathcal{X})$
whose probability measure satisfies $P(\mathcal{S})\ge 1/2^{\eta T}$ with some $\eta\in(0,1)$,
if in every round $t\in[T]$, $D_{\textrm{TV}}(G_t\parallel P_t)\le\gamma$,
then the resulting mixture of generators $\bm{G}^*$ can
$(1-(\gamma+2\delta)/\ln 2-\eta)\delta$-cover $\mathcal{S}$ under distribution $P$.
\end{restatable}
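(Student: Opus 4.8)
The plan is to obtain \thmref{main} by chaining the two lemmas already in hand: I use \lemref{continuous_total_variation_distance} to manufacture exactly the per-round hypothesis that \lemref{continuous_coverage_guarantee} requires, and then invoke the latter. First I would fix an arbitrary round $t\in[T]$ and instantiate \lemref{continuous_total_variation_distance} with the second distribution taken to be $P_t$ (the normalized reweighted distribution built in Line 6 of \algref{mwu_training}) and with the same covering parameter $\delta$. The hypothesis $D_{\textrm{TV}}(G_t\parallel P_t)\le\gamma$ then yields that $G_t$ can $(\delta,\,1-2\delta-\gamma)$-cover $(P,P_t)$. Crucially this holds for every $t$ with one and the same value $1-2\delta-\gamma$, so the per-round covering probability is uniform across all rounds, which is precisely what the second lemma needs.

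Next I would set $\varepsilon:=2\delta+\gamma$, so that the conclusion of the previous step reads verbatim as ``$G_t$ can $(\delta,\,1-\varepsilon)$-cover $(P,P_t)$ for every $t\in[T]$'', matching the antecedent of \lemref{continuous_coverage_guarantee}. Before invoking that lemma I must verify its standing assumption $\varepsilon\in[0,1)$. Since $\delta,\gamma\in(0,1]$ we always have $\varepsilon>0$, so the only thing to check is $\varepsilon<1$, i.e. $2\delta+\gamma<1$. This is where I expect the one genuine subtlety to sit: when $2\delta+\gamma\ge 1$ the lemma does not apply directly, but in that regime $(\gamma+2\delta)/\ln 2\ge 1/\ln 2>1$, so the target coefficient $1-(\gamma+2\delta)/\ln 2-\eta$ is negative; since $\bm{G}^*(\mathcal{S})\ge 0$ always holds, the asserted cover of $\mathcal{S}$ is then trivially true and there is nothing to prove. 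Hence I would dispatch that degenerate case first and assume $\varepsilon\in(0,1)$ thereafter.

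Finally, with $\varepsilon=2\delta+\gamma\in(0,1)$ and the subset $\mathcal{S}$ satisfying $P(\mathcal{S})\ge 1/2^{\eta T}$ for the given $\eta\in(0,1)$, I would apply \lemref{continuous_coverage_guarantee} directly. It concludes that the output mixture $\bm{G}^*$ can $(1-\varepsilon/\ln 2-\eta)\delta$-cover $\mathcal{S}$ under $P$. Substituting back $\varepsilon=2\delta+\gamma$ gives the coefficient $\bigl(1-(\gamma+2\delta)/\ln 2-\eta\bigr)\delta$, which is exactly the bound claimed in \thmref{main}. I would emphasize that the theorem carries no independent difficulty of its own: it is a pure composition, and the only care needed is the uniform-$\varepsilon$ bookkeeping together with the vacuous edge case above. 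The real work lives entirely in the two lemmas being combined, namely \lemref{continuous_total_variation_distance} (the total-variation-to-pointwise-coverage conversion) and \lemref{continuous_coverage_guarantee} (the multiplicative-weights potential argument that amplifies uniform per-round coverage into aggregate coverage of $\mathcal{S}$).
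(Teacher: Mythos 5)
Your proposal is correct and follows essentially the same route as the paper: apply \lemref{continuous_total_variation_distance} with $Q=P_t$ to get that each $G_t$ can $(\delta,1-2\delta-\gamma)$-cover $(P,P_t)$, then invoke \lemref{continuous_coverage_guarantee} with $\varepsilon=2\delta+\gamma$. Your explicit handling of the degenerate case $2\delta+\gamma\ge 1$ (where the claimed coverage coefficient is negative and the conclusion is vacuous) is a small point of care that the paper's two-line proof omits, but it does not change the argument.
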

\vspace{-1.5mm}
In practice, existing generative models (such as GANs) can approximate $P_t$ sufficiently well,
and thus $D_{\textrm{TV}}(G_t\parallel P_t)\le\gamma$ is always satisfied for some $\gamma$.
According to \thmref{main}, a pointwise lower-bounded coverage can be obtained by 
our \algref{mwu_training}. If we choose to use a more expressive generative model (e.g., a GAN with a stronger network architecture), 
then $G_t$ can better fit $P_t$ in each round, yielding a smaller $\gamma$ used in \thmref{main}. 
Consequently, the pointwise lower bound of the data coverage becomes larger,
and effectively the coefficient $\psi$ in~\eq{pointwise} becomes larger.





\vspace{-0.04in}
\subsection{Insights from the Analysis}\label{sec:insights}
\vspace{-0.04in}
\paragraph{$\gamma$, $\eta$, $\delta$, and $T$ in \thmref{main}.}
In \thmref{main}, $\gamma$ 
depends on the expressive power of the generators being used.
It is therefore determined once the generator class $\mathcal{G}$ is chosen.
But $\eta$ can be directly set by the user and a smaller $\eta$ demands a larger $T$
to ensure $P(\mathcal{S})\ge 1/2^{\eta T}$ is satisfied.
Once $\gamma$ and $\eta$ is determined, we can choose the best $\delta$ by maximizing
the coverage bound (i.e., $(1-(\gamma+2\delta)/\ln 2-\eta)\delta$) in \thmref{main}. For example, 
if $\gamma\leq 0.1,\eta \leq 0.01$, then $\delta\approx 1/4$ would optimize the coverage bound
(see \appref{choice_delta} for more details), and in this case the coefficient $\psi$ in \eq{pointwise}
is at least $1/30$.

\vspace{-0.3mm}
\thmref{main} also sets the tone for the training cost. As explained in \appref{choice_delta}, 
given a training dataset of size $n$, the size of the generator mixture, $T$,
needs to be at most $O(\log n)$. This theoretical bound is consistent with our experimental results
presented in \secref{experiment}. In practice, only a small number of generators are needed.




\paraspace
\vspace{-0.3mm}
\paragraph{Estimated density function $g_t$.} 
The analysis in \secref{coverage} assumes that the generated probability density $g_t$ of
the generator $G_t$ in each round is known, while in practice we have to estimate $g_t$
by training a discriminator $D_t$ (recall Section~\ref{sec:alg}).
Fortunately, only mild assumptions in terms of the quality of $D_t$ 
are needed to retain the pointwise lower-bounded coverage. 
Roughly speaking, $D_t$ needs to meet two conditions:
1) In each round $t$, only a fraction of the covered data points (i.e., those 
with $g_t(x)\ge \delta\cdot p(x)$) is falsely classified by $D_t$ and doubled 
their weights.
2) In each round $t$, if the weight of a data point $x$ is not doubled based on 
the estimation of $D_t(x)$, then there is a good chance that $x$ is truly 
covered by $G_t$ (i.e., $g_t(x)\ge \delta\cdot p(x)$).
A detailed and formal discussion is presented in \appref{edf}.
In short, our estimation of $g_t$ would not deteriorate the efficacy of the algorithm,
as also confirmed in our experiments.

\paraspace
\vspace{-0.5mm}
\paragraph{Generalization.}
An intriguing question for all generative models is their \emph{generalization}
performance: {how well can a generator trained on an empirical distribution 
(with a finite number of data samples) generate samples that follow the true data distribution?}
While the generalization performance has been long studied for supervised classification,
generalization of generative models remains a widely open theoretical question.
We propose a notion of generalization for our method, and provide a preliminary theoretical analysis.
All the details are presented in \appref{generalization}.

%


 

\vspace{-0.05in}
\section{Experiments}\label{sec:experiment}
\vspace{-0.05in}
We now present our major experimental results, while referring to \appref{exp_discussion}
for network details and more results.
We show that our mixture of generators is able to cover all the modes in various synthetic and real datasets,
while {existing methods always have some modes missed}.

\vspace{-0.05in}
Previous works on generative models used
the Inception Score~\cite{salimans2016improved} or the Fr\'{e}chet Inception
Distance~\cite{heusel2017gans} as their evaluation metric. 
But we do not use them, because they are both global measures, not
reflecting mode coverage in local regions~\cite{barratt2018note}. 
Moreover, these metrics are designed to measure the quality of generated images, which is orthogonal to our goal. For
example, one can always use a more expressive GAN in each iteration of our algorithm to obtain 
better image quality and thus better inception scores.

\begin{figure*}[t]
    \centering
    \vspace{-1mm}
    \includegraphics[width=0.98\textwidth]{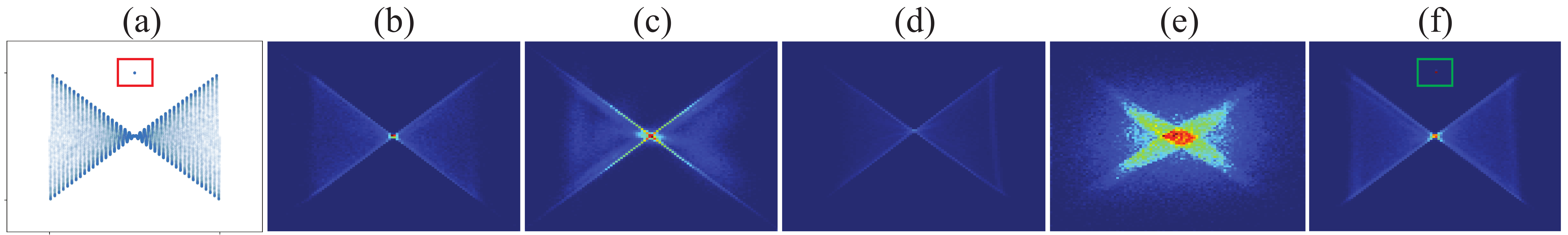}
    \vspace{-2mm}
        \caption{ \textbf{Generative models on synthetic dataset.} 
        (a) The dataset consists of two modes: one major mode as an expanding sine curve ($y=x\sin\frac{4x}{\pi}$)
        and a minor mode as a Gaussian located at $(10,0)$ (highlighted in the reb box).
        (b-f) We show color-coded distributions of generated samples from (b) EM, (c) GAN, (d) AdaGAN, (e) VAE,
        and (f) our method (i.e., a mixture of GANs). Only our method is able to cover the second mode 
        (highlighted in the green box; zoomin to view).
        \label{fig:sin_comp}}
    \vspace{-5mm}
\end{figure*}

\vspace{-0.05in}
Since the phenomenon of missing modes is particularly prominent in GANs, our
experiments emphasize on the mode coverage performance of GANs and compare our
method (using a mixture of GANs) with DCGAN~\cite{radford2015unsupervised},
MGAN~\cite{hoang2018mgan}, and AdaGAN. The latter two also use multiple GANs to
improve mode coverage, although they do \emph{not} aim for the same mode coverage notion as ours.


\paraspace
\vspace{-0.5mm}
\paragraph{Overview.} 
We first outline all our experiments, including those presented in \appref{exp_discussion}. 
\textbf{i)} We compare our method with a number of classic generative models on a synthetic dataset. 
\textbf{ii)} In \appref{exp_discussion_adagan}, we also compare our
method with AdaGAN~\cite{tolstikhin2017adagan} on other synthetic datasets as well as stacked MNIST dataset, 
because both are boosting algorithms aiming at improving mode coverage.
\textbf{iii)} We further compare our method with a single large DCGAN, 
AdaGAN, and MGAN on the 
Fashion-MNIST dataset~\cite{xiao2017fashion} mixed with a very small portion 
of MNIST dataset~\cite{lecun1998gradient}. 

\paraspace
\paragraph{Various generative models on synthetic dataset.} 
As we show in \appref{other_types}, many generative models, such as expectation-maximization (EM) methods,
VAEs, and GANs, all rely on a global statistical distance in their training.
We therefore test their mode coverage and compare with ours.
We construct on $\mathbb{R}^2$ a synthetic dataset with two modes.
The first mode consists of data points whose $x$-coordinate is uniformly sampled by $x_i\sim [-10, 10]$ 
and the $y$-coordinate is $y_i=x_i\sin\frac{4x_i}{\pi}$. The second mode has data points forming a Gaussian 
at $(0,10)$.
The total number of data points in the first mode is 400$\times$ of the second.
As shown in \figref{sin_comp}, generative models include EM, GAN, VAE,
and AdaGAN~\cite{tolstikhin2017adagan} all fail to cover the second mode.
Our method, in contrast, captures both modes.
We run KDE to estimate the likelihood of our generated samples on our synthetic data experiments (using KDE
bandwidth=0.1). We compute $L = 1/N \sum_{i} P_{model}(x_i)$, where $x_i$ is a sample in the minor mode. For the minor mode,
our method has a mean log likelihood of -1.28, while AdaGAN has only -967.64 (almost no samples from AdaGAN).



\begin{wraptable}[13]{r}{0.52\textwidth}
    \vspace{-6mm}
    \begin{center}
    \begin{small}
        \begin{tabular}{lccc}
            \toprule
            & ``1''s  & Frequency  & Avg Prob. \\ \hline
            DCGAN   &  $13$      & $0.14\times 10^{-4}$ & $0.49$ \\ 
            MGAN    &  collapsed    & -       & - \\ 
            AdaGAN  &  $60$     & $0.67\times 10^{-4}$ & $0.45$ \\ 
            Our method &  $289$ & $3.2\times 10^{-4}$ &$0.68$\\ 
            \bottomrule
        \end{tabular}
        \vspace{-3.5mm}
        \caption{ \textbf{Ratios of generated images classified as ``1''.} We
        generate $9\times 10^5$ images from each method. The second column indicates the
        numbers of samples being classified as ``1'', and the third column indicates
        the ratio. In the fourth column, we average the prediction probabilities
        over all generated images that are classified as ``1''.}
        \label{tab:freq}
        \end{small}
    \end{center}   
\end{wraptable}
\paraspace
\vspace{-0.5mm}
\paragraph{Fashion-MNIST and partial MNIST.} 
Our next experiment is to challenge different GAN models
with a real dataset that has separated and unbalanced modes.
This dataset consists of the entire training dataset of Fashion-MNIST (with 60k images) mixed with
randomly sampled $100$ MNIST images labeled as ``1''. 
The size of generator mixture is always set to be $30$ for
AdaGAN, MGAN and our method, and all generators share the same network
structure. Additionally, 
when comparing with a \emph{single} DCGAN,
we ensure that the DCGAN's total number of parameters is comparable
to the total number of parameters of the $30$ generators in AdaGAN, MGAN, and
ours.

\vspace{-0.02in}
To evaluate the results, we train an 11-class classifier to distinguish the 10 classes in
Fashion-MNIST and one class in MNIST (i.e., ``1''). First, we check how many samples from each 
method are classified as ``1''. The test setup and results are shown in~\tabref{freq} 
and its caption.  The results suggest that our method can
generate more ``1'' samples with higher prediction confidence. 
Note that MGAN has a strong mode collapse and fails to produce ``1'' samples. 
While DCGAN and AdaGAN generate some samples that are classified as ``1'', inspecting
the generated images reveals that those samples are all visually far from ``1''s, but incorrectly 
classified by the pre-trained classifier (see \figref{topconfidence}). In contrast, our 
method is able to generate samples close to ``1''. We also note that our method can produce
higher-quality images if the underlying generative models in each round become stronger.

 \begin{figure*}[t]
    \centering
    \vspace{-1mm}
    \includegraphics[width=0.99\textwidth]{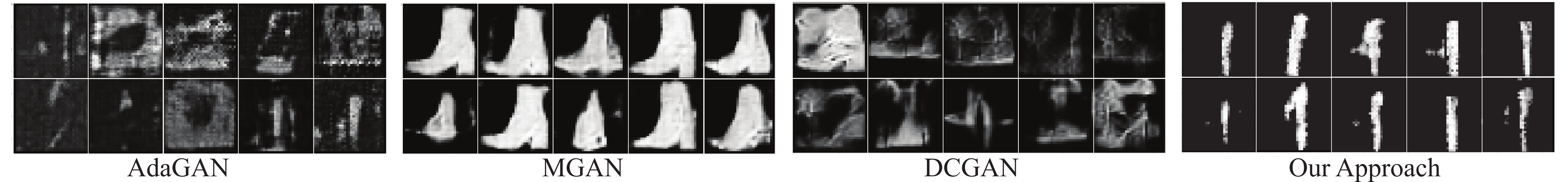}
    \vspace{-3mm}
        \caption{ \textbf{Most confident ``1'' samples.} 
        Here we show samples that are generated by each tested methods
        and also classified by the pre-trained classifier most confidently 
        as ``1'' images (i.e., top 10 in terms of the classified probability).
        Samples of our method are visually much closer to ``1''.
        \label{fig:topconfidence}}
    \vspace{-3mm}
\end{figure*}




\begin{wrapfigure}[12]{r}{0.468\textwidth}
    \centering
    \vspace{-1mm}
    \includegraphics[width=0.468\textwidth]{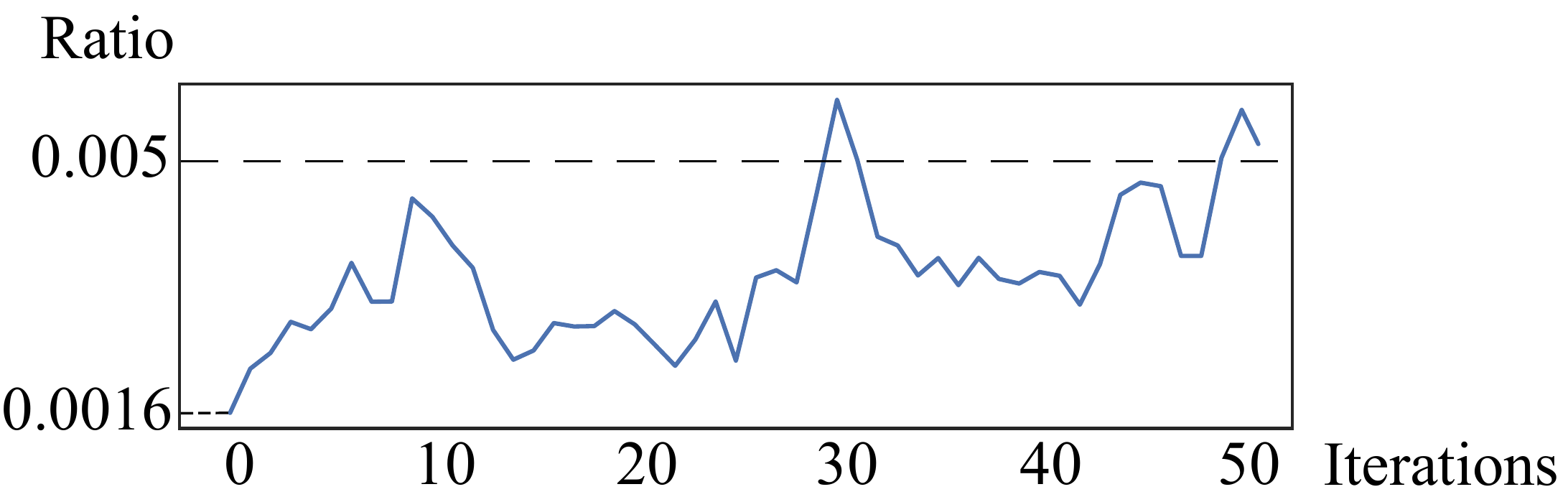}
    \vspace{-6mm}
    \caption{ \textbf{Weight ratio of ``1''s.} We calculate the ratio of the total weights of
        training images labeled by ``1'' to the total weights of all training images in each
                round, and plot here how the ratio changes with respect to the iterations
                in our algorithm.
    }\label{fig:totalweightsratio}
\end{wrapfigure}
\textbf{Another remarkable feature} is observed in our algorithm. In each round of our training algorithm,
we calculate the total weight $\bar{w}_t$ of provided training samples classified as ``1''
as well as the total weight $W_t$ of all training samples. When plotting the ratio 
$\bar{w}_t/W_t$ changing with respect to the number of rounds (\figref{totalweightsratio}),
interestingly, we found that this ratio has a maximum value at around 0.005 
in this example. We conjecture that in the training dataset if the ratio of 
``1'' images among all training images is around $1/200$, then 
a single generator may learn and generate ``1'' images (the minority mode).
To verify this conjecture, we trained a GAN (with the same network structure)
on another training dataset with 60k training
images from Fashion-MNIST mixed with $300$ MNIST ``1'' images.
We then use the trained generator to sample 100k images. As a result,
In a fraction of $4.2\times 10^{-4}$, those images are classified as ``1''.  
Figure~\ref{fig:single_DC_highconf} in \appref{exp_discussion} shows some of those images.
This result confirms our conjecture and suggests that $\bar{w}_t/W_t$ may be used as a measure of \emph{mode bias}
in a dataset.

\begin{figure*}[t]
    \centering
    \includegraphics[width=0.99\columnwidth]{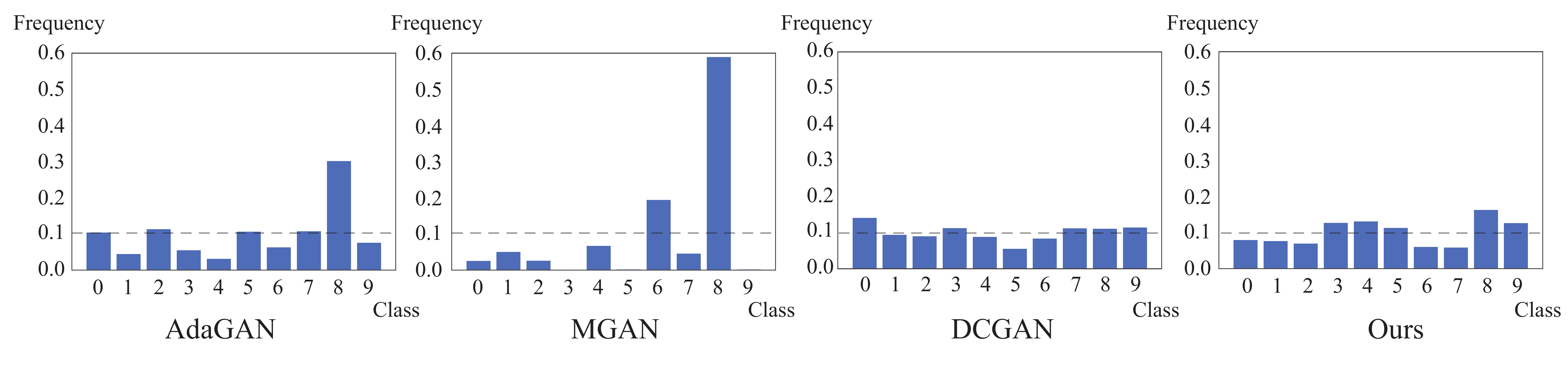}
    \vspace{-4mm}
        \caption{ \textbf{Distribution of generated samples.} 
        Training samples are drawn uniformly from each class.
        But generated samples by AdaGAN and MGAN are considerably nonuniform,
        while those from DCGAN and our method are more uniform.
        This experiment suggests that the conventional heuristic of reducing a statistical distance
        might not merit its use in training generative models.
    }\label{fig:unbalanced_samples_for_fashion}
    \vspace{-4mm}
\end{figure*}


Lastly, in \figref{unbalanced_samples_for_fashion}, 
we show the generated distribution over the 10 Fashion-MNIST classes from each tested method.
We neglect the class ``1'', as MGAN fails to generate them.
The generated samples of AdaGAN and MGAN is highly nonuniform, though in the training
dataset, the 10 classes of images are uniformly distributed.
Our method and DCGAN produce more uniform samples.
This suggests that although other generative models (such as AdaGAN and MGAN)
aim to reduce a global statistical distance, the generated samples may not easily match
the empirical distribution---in this case, a uniform distribution. 
Our method, while not aiming for reducing the statistical
distance in the first place, matches the target empirical distribution plausibly, as a byproduct.





\vspace{-0.05in}
\section{Conclusion}
\vspace{-0.05in}
We have presented an algorithm that iteratively trains a mixture of generators, driven 
by an explicit notion of complete mode coverage. 
With this notion for designing generative models, 
our work poses an alternative goal, 
one that differs from the conventional training philosophy:
instead of reducing a global statistical distance between the target distribution and generated distribution, 
one only needs to make the distance mildly small but not have to reduce it toward a perfect zero, 
and our method is able to boost the generative model with theoretically guaranteed mode coverage.

\paraspace
\paragraph{Acknowledgments.}
This work was supported in part by the National Science Foundation (
CAREER-1453101, 1816041, 1910839, 1703925, 1421161, 1714818, 1617955, 1740833),  Simons Foundation
(\#491119 to Alexandr Andoni), Google Research Award,
a Google PhD Fellowship,
a Snap Research Fellowship, 
a Columbia SEAS CKGSB Fellowship, 
and SoftBank Group.

\newpage
\bibliographystyle{unsrt}
\bibliography{ref}

\newpage
\appendix
\begin{center}
\Large
\textbf{Supplementary Document}\\ 
\smallskip
\textbf{Rethinking Generative Mode Coverage:\\ A Pointwise Guaranteed Approach}
\medskip
\end{center}

\section{Global Statistic Distance Based Generative Approaches}\label{sec:other_types}
In this section, we analyze a few classic generative models to show their
connections to the reduction of a certain global statistical distance.
The reliance on global statistical distances
explains why they suffer from missing modes, as empirically confirmed in \figref{sin_comp} of the main text.

\paragraph{Maximum Likelihood Estimation.}
Consider a target distribution $P$ with density function $p(\cdot)$. 
Suppose we are provided with $n$ i.i.d. samples $\{x_1,x_2,\cdots,x_n\}$ drawn from $P$.
The goal of training a generator through maximum likelihood estimation (MLE) is to find 
from a predefined generator family $\mathcal{G}$ 
the generator $G$ that maximize 
$$ L(G) = \frac{1}{n}\sum_i \log g(x_i),$$
where $g(\cdot)$ is the probability density function of the distribution generated by $G$. 
When $n$ approaches $\infty$, the MLE objective amount to 
\begin{align*}
    \lim_{n\to\infty} \left(\max_{G\in\mathcal{G}}L(G)\right) = \max_{G\in\mathcal{G}}\E_{x \sim P}[\log g(x)]=\max_{G\in \mathcal{G}}\int p(x)\log g(x) \mathrm{d}x
    =\min_{G\in \mathcal{G}}\left(-\int p(x)\log g(x) \mathrm{d}x\right),
\end{align*} 
which is further equivalent to solve the following optimization problem:
\begin{align*}
\int p(x)\log p(x) \mathrm{d}x+\min_{G\in\mathcal{G}}\left(-\int p(x)\log g(x) \mathrm{d}x\right) = \min_{G\in\mathcal{G}} D_{\textrm{KL}}(P\parallel G).
\end{align*}
This is because the first term on the LHS is irrelevant from $G$ and thus is a constant.
From this expression, it is evident that the goal of MLE is to minimize a global statistical distance, namely, KL-divergence.

Figure~\ref{fig:mle} illustrates an 1D example wherein the MLE fails to achieve pointwise coverage.
Although \figref{mle}, for pedagogical purpose, involves a generator family $\mathcal{G}$ consisting of only two generators,
it is by no means a pathological case, since
in practice generators always have limited expressive power, 
limited by a number of factors.
For GANs, it is limited by the structure of generators.
For VAEs, it is the structure of encoders and decoders.
For Gaussian Mixture models, it is the 
dimension of the space and the number of mixture components.  
Given a $\mathcal{G}$ with limited expressive power, MLE cannot guarantee complete mode coverage.

%

\begin{figure}[t!]
    \centering
    \includegraphics[width=0.6\textwidth]{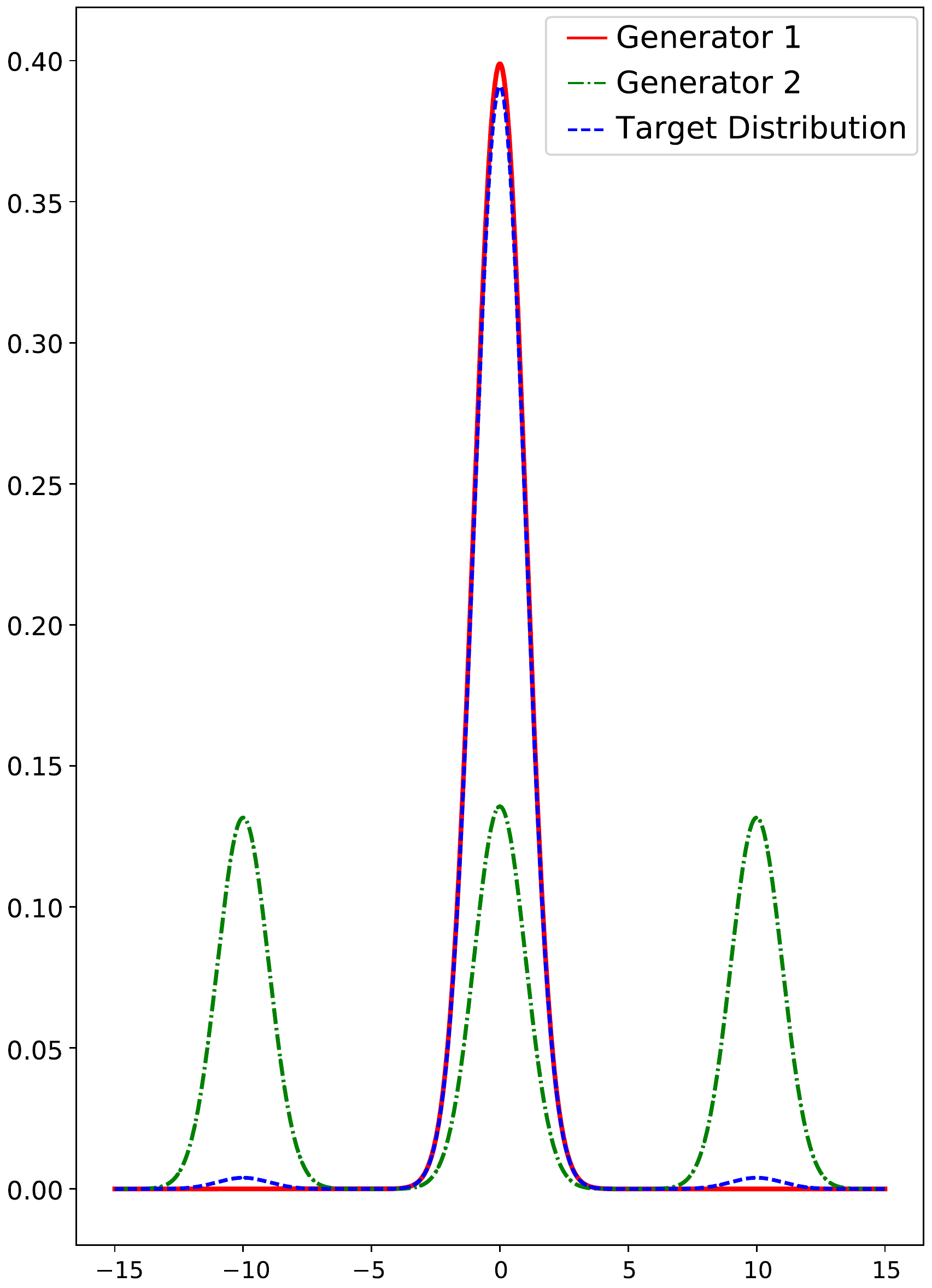}
    \vspace{-3mm}
    \caption{ 
        Consider a 1D target distribution $P$ with three modes, i.e., a mixture of three Gaussians,
        $P = 0.98\cdot\mathcal{N}(0,1)+0.01\cdot\mathcal{N}(10,1)+0.01\cdot\mathcal{N}(-10,1)$.
        In this example, the generator class $\mathcal{G}$ only contains two generators.
        The generated distribution of the first generator $G_1$ is $\mathcal{N}(0,1)$,
        while the distribution of the second generator $G_2$ is $0.34\cdot\mathcal{N}(0,1)+0.33\cdot\mathcal{N}(10,1)+0.33\cdot\mathcal{N}(-10,1)$. 
        In this case, we have
        $D_{\mathrm{KL}}(P,G_1)\approx1.28$, $D_{\mathrm{KL}}(P,G_2)\approx
        1.40$, $D_{\mathrm{KL}}(G_1,P)\approx
        0.029$, and $D_{\mathrm{KL}}(G_2,P)\approx2.81$ (all $D_{\mathrm{KL}}$ measures
        use a log base of $2$).
        To minimize $D_{\mathrm{KL}}(P,G)$,
        maximum likelihood estimation method will choose the first generator, $G_1$.
        The probability of drawing samples from the side modes (in $[-14,-6]$ and $[6,14]$) of the target distribution 
        $P$ is $\Pr_{x\sim P}[6\leq |x|\leq 14]\approx 0.02$, but
        the probability of generating samples from the first generator in the same intervals is
        $\Pr_{x\sim G_1}[6\leq |x|\leq 14]\approx 10^{-9}$.
        Thus, the side modes are almost missed. 
        To make the first generator satisfy Equation~\eqref{eq:pointwise}, we have to choose $\psi\approx 10^{-7}$,
        which in practice implies no pointwise coverage guarantee.
        In contrast, the generated distribution of the second generator can
        satisfy Equation~\eqref{eq:pointwise} with $\psi>1/3$, which is a
        plausible pointwise coverage guarantee.
    } \label{fig:mle}
    \vspace{-3mm}
\end{figure}

\paragraph{Variational Autoencoders (VAEs).} 
A VAE has a encoder $\theta\in \Theta$ and a decoder $\phi\in \Phi$
chosen from an encoder and decoder families, $\Theta$ and $\Phi$. It
also needs a known prior distribution $Q$ (whose probability density is $q(\cdot)$) of latent variable $z$.  
Provided a decoder $\phi$ and the prior distribution $Q$, we can
construct a generator $G$: to generate an $x$, we firstly sample a latent
variable $z\sim Z$ and then sample an $x$ according to the (approximated)
likelihood function $p_{\phi}(x|z)$.  
To train a VAE, a target distribution $P$ is provided and the training objective is
\begin{align}
\max_{\theta\in\Theta,\phi\in \Phi}\int_{x} p(x)\cdot \mathrm{ELBO}_{\theta,\phi}(x) \mathrm{d}x,
\end{align}
where $\mathrm{ELBO}_{\theta,\phi}(x)$ is called the evidence lower bound, defined as
\begin{align}
\mathrm{ELBO}_{\theta,\phi}(x)=\int_{z} p_{\theta}(z|x) \log p_{\phi}(x|z)  \mathrm{d}z - \int_{z} p_{\theta}(z|x) \log\left(\frac{p_{\theta}(z|x)}{q(z)}\right)  \mathrm{d}z,
\end{align}
Here $p_{\theta}(z|x)$ is the (approximated) posterior function.

Let $G\in\mathcal{G}$ be a generator corresponding to the decoder $\phi$ and the prior $Z$,
and let $g(\cdot)$ be the generative probability density of $G$.
Then, we have the following derivation:
\begin{equation}
\begin{split}
&\E_{x\sim P}[\log g(x)]=\int_{x} p(x) \log g(x) \mathrm{d}x=\int_{x} p(x) \int_{z} p_{\theta}(z|x) \log\left(g(x)\right)\mathrm{d}z \mathrm{d}x\\
=&\int_{x} p(x) \int_{z} p_{\theta}(z|x) \log\left(\frac{p_{\phi}(x|z)q(z)}{p_{\phi}(z|x)}\right)\mathrm{d}z \mathrm{d}x\\
=&\int_{x} p(x) \int_{z} p_{\theta}(z|x) \log\left(\frac{p_{\phi}(x|z)q(z)p_{\theta}(z|x)}{p_{\phi}(z|x)p_{\theta}(z|x)}\right)\mathrm{d}z \mathrm{d}x\\
=&\int_{x} p(x) \left(\int_{z} p_{\theta}(z|x) \log\left(\frac{p_{\phi}(x|z)q(z)}{p_{\theta}(z|x)}\right)\mathrm{d}z + \int_{z} p_{\theta}(z|x) \log\left(\frac{p_{\theta}(z|x)}{p_{\phi}(z|x)}\right)\mathrm{d}z\right) \mathrm{d}x\\
=&\int_{x} p(x) \left(\int_{z} p_{\theta}(z|x) \log\left(\frac{p_{\phi}(x|z)q(z)}{p_{\theta}(z|x)}\right)\mathrm{d}z + D_{\mathrm{KL}}\left(p_{\theta}(z|x)\parallel p_{\phi}(z|x)\right)\right) \mathrm{d}x\\
=&\int_{x} p(x) \left(
\mathrm{ELBO}_{\theta,\phi}(x)
+ D_{\mathrm{KL}}\left(p_{\theta}(z|x)\parallel p_{\phi}(z|x)\right)\right) \mathrm{d}x.
\end{split}
\end{equation}
Notice that $D_{\mathrm{KL}}\left(p_{\theta}(z|x)\parallel p_{\phi}(z|x)\right)$ is always non-negative
and it reaches $0$ when $p_{\theta}(z|x)$ is the same as $p_{\phi}(z|x)$. This means 
\begin{align*}
\E_{x\sim P}[\log g(x)]\geq \int_{x} p(x) \cdot  \mathrm{ELBO}_{\theta,\phi}(x) \mathrm{d}x.
\end{align*}
If $\theta$ is perfectly trained, i.e., $p_{\theta}(z|x)$ matches exactly $p_{\phi}(z|x)$, then 
\begin{align*}
\max_{G\in\mathcal{G}} \E_{x\sim P}[\log g(x)] = \max_{\theta\in\Theta,\phi\in \Phi}\int_{x} p(x)\cdot \mathrm{ELBO}_{\theta,\phi}(x) \mathrm{d}x.
\end{align*}
From this perspective, it becomes evident that optimizing a VAE essentially amounts to a maximum likelihood estimation.
Depending on the generator family $\mathcal{G}$ (determined by $\Phi$ and $Z$) and the encoder family $\Theta$, 
mode collapse may not always happen.
But since it is essentially a maximum likelihood estimation method, the
pointwise mode coverage~\eqref{eq:pointwise} can not be guaranteed in
theory, as discussed in the previous paragraph.

\paragraph{Generative Adversarial Networks (GANs).} 
Given a target distribution $P$, the objective of training a GAN~\cite{goodfellow2014generative} is to solve the following
optimization problem:
\begin{align*}
\min_{G\in\mathcal{G}}\max_{D} L(G,D),
\end{align*}
where $L(G,D)$ is defined as
\begin{align*}
L(G,D)=\E_{x\sim P}[\log(D(x))]+\E_{x\sim G}[\log(1-D(x))]=\int_{x} p(x)\log(D(x)) + g(x)\log(1-D(x))\mathrm{d}x.
\end{align*}
As shown in \cite{goodfellow2014generative}, the optimal discriminator $D^*$ of Nash equilibrium satisfies $D^*(x)\equiv 1/2$.
When using $D^*$ in $L(G,D)$, we have 
\begin{align*}
L(G,D^*)= D_{\mathrm{KL}}\left(P \parallel\frac{P+G}{2}\right)+D_{\mathrm{KL}}\left(G \parallel\frac{P+G}{2}\right) -2 = 2 D_{\mathrm{JS}}(P \parallel G) - 2,
\end{align*}
where $D_{\mathrm{JS}}$ is the Jensen-Shannon divergence.
Thus, GAN essentially is trying to reduce the global statistical distance, measured by Jensen-Shannon divergence.

There are many variants of GANs, which use (more or less) different loss functions $L(G,D)$ in training. 
But all of them still focus on reducing a global statistical distance.
For example, the loss function of the Wasserstein GAN \cite{arjovsky2017wasserstein} is $\E_{x\sim P}[D(x)]-\E_{x\sim G}[D(x)]$. 
Optimizing such a loss function over all $1$-Lipschitz $D$ is essentially to reduce the Wasserstein distance, 
another global statistical distance measure.

\begin{algorithm}[t]
    \caption{Training on empirical distribution}
    \label{alg:empirical}
    \begin{algorithmic}[1]
        \STATE {\bfseries Parameters:} $T$, a positive integer number of generators, 
        and $\delta\in(0,1)$, a covering threshold. 
        \STATE {\bfseries Input:} a set $\{x_i\}_{i=1}^n$ of i.i.d. samples drawn from an unknown data distribution $P$.
        \STATE For each $x_i$, initialize its weight $w_1(x_i)=1/n$.  
        \FOR{$t = 1\rightarrow T$}
        \STATE Construct an empirical distribution $\widehat{P}_t$ such that each $x_i$ is drawn with probability $\frac{w_t(x_i)}{W_t}$, where $W_t=\sum_i w_t(x_i)$.
        \STATE Train $G_t$ on i.i.d. samples drawn from $\widehat{P}_t$.
        \STATE Train a discriminator $D_t$ to distinguish the samples from $\widehat{P}_t$ and the samples from $G_t$.
        \STATE For each $x_i$, if 
        $\left(\frac{1}{D_t(x_i)}-1\right)\cdot \frac{w_t(x_i)}{W_t}<\frac{\delta}{n}$,
        set $w_{t+1}(x_i) = 2 \cdot w_t(x_i)$. \\
        Otherwise, set $w_{t+1}(x_i)=w_t(x_i)$.
        \ENDFOR
        \STATE {\bfseries Output:} a mixture of generators $\bm{G}^*=\{G_1,\ldots,G_T\}$.
    \end{algorithmic}
\end{algorithm}

\section{Algorithm on Empirical Dataset}\label{sec:alg_emp}
In practice, the provided dataset $\{x_i\}_{i=1}^n$
consists of $n$ i.i.d.~samples from $P$. 
According to the Glivenko-Cantelli theorem \cite{cantelli1933sulla}, 
the uniform distribution over $n$ i.i.d.~samples from $P$
will converge to $P$ when $n$ approaches to infinity. 
As a simple example,
let $P$ be a discrete distribution over two points, $A$ and $B$, with $P(A)=\nicefrac{5}{7}$ and
$P(B)=\nicefrac{2}{7}$. If $7$ samples are drawn from $P$ to form the input data, ideally they
should be a multiset $\{A,A,A,A,A,B,B\}$. Each sample has a weight $\nicefrac{1}{7}$, and the total
weights of $A$ and $B$ are $\nicefrac{5}{7}$ and $\nicefrac{2}{7}$. 
Then we will train a generator $G_1$ from the training distribution where point $A$ has training probability $\nicefrac{5}{7}$ and point $B$ has training probability $\nicefrac{2}{7}$.

If the generator $G_1$ obtained is collapsed, e.g., $G_1$ samples $A$ with probability $1$ and samples $B$ with probability $0$, 
then ideally the discriminator $D_1$ will satisfy $D_1(A)=\nicefrac{5}{12}$ and $D_1(B)=1$.
Suppose the parameter $\delta=1/4$ in \algref{mwu_training} (and \algref{empirical}). We have 
\begin{align*}
\left(\frac{1}{D_1(A)}-1\right)\cdot \frac{w_1(A)}{W_1(A)} =
\left(\frac{1}{D_1(A)}-1\right)\cdot \frac{5}{7}\cdot \frac{1}{5}\geq \delta \cdot P(A)\cdot \frac{1}{5} = \delta/n =\frac{\nicefrac{1}{4}}{7}
\end{align*}
and
\begin{align*}
\left(\frac{1}{D_1(B)}-1\right)\cdot \frac{w_1(B)}{W_1(B)} = \left(\frac{1}{D_1(B)}-1\right)\cdot \frac{2}{7}\cdot \frac{1}{2}< \delta\cdot P(B)\cdot \frac{1}{2} = \delta/n =\frac{\nicefrac{1}{4}}{7}.
\end{align*}
Thus, each sample $B$ will double the weight, and each sample $A$ will remain the same weight unchanged. 
The total weight of $A$ is $\nicefrac{5}{7}$, and the total weight of $B$ is $\nicefrac{4}{7}$.
In the second iteration, the total probability of $A$ will be decreased to $\nicefrac{5}{9}$ and the total probability of $B$ will be increased to $\nicefrac{4}{9}$. 
We will use the new probability to train the generator $G_2$ and the discriminator $D_2$, and repeat the above procedure.

In practice, we do not need to know the probability density $p(x)$ of $P$;
every sample $x_i$ is considered to have a finite and uniform probability measure.
After the generator $G$ is trained over this dataset, 
its generated sample distribution should approximate well the data distribution $P$.
In light of this, the \algref{mwu_training} can be implemented empirically as what is outlined
in \algref{empirical}.

\section{Statistical Distance from Lower-bounded Pointwise Coverage }\label{sec:local_compatible_global}
Equation~\eqref{eq:pointwise} (i.e., $\forall x\in\mathcal{X}, g(x)\geq \psi \cdot p(x)$)
is a pointwise lower-bounded data coverage that we pursue in this paper.
If Equation~\eqref{eq:pointwise} is satisfied, then 
the total variation distance between $P$ and $G$ is automatically upper bounded, because
\begin{align*}
    D_{\text{TV}}(P\parallel Q) = &\frac{1}{2}\int_{\mathcal{X}}|p(x)-g(x)|\mathrm{d}x
=\int_{\mathcal{X}}\mathds{1}{(p(x)>g(x))}\cdot (p(x)-g(x))\mathrm{d}x\\
\leq~&\int_{\mathcal{X}}\mathds{1}{(p(x)>g(x))}\cdot (p(x)-\psi\cdot p(x))\mathrm{d}x\\
=~& (1-\psi)\cdot \int_{\mathcal{X}}\mathds{1}{(p(x)>g(x))}\cdot p(x) \mathrm{d}x\\
\leq~& 1-\psi.
\end{align*}


\section{Proof of Equation~\eqref{eq:yao_example}}\label{sec:proof_game}
Suppose two arbitrary distributions $P$ and $Q$ are defined over a data space $\mathcal{X}$. 
$G$ is the distribution of generated samples over $\mathcal{X}$. 
If the total variation distance between $Q$ and $G$ is at most $0.1$, then we have
\begin{align*}
\Pr_{x\sim Q}\left[g(x)\geq \frac{1}{4}p(x)\right]
=~&\int_{\mathcal{X}} \mathds{1}{\left(g(x)\geq \frac{1}{4}p(x)\right)} \cdot q(x) \mathrm{d}x \\
\geq~&\int_{\mathcal{X}} \mathds{1}{\left(g(x),q(x)\geq \frac{1}{4}p(x)\right)} \cdot q(x) \mathrm{d}x\\
=~&\int_{\mathcal{X}} \mathds{1}{\left(q(x)\geq \frac{1}{4}p(x)\right)} \cdot q(x) \mathrm{d}x - \int_{\mathcal{X}} \mathds{1}{\left(q(x)\geq \frac{1}{4}p(x)> g(x)\right)} \cdot q(x) \mathrm{d}x\\
\geq~ & \frac{3}{4} - \int_{\mathcal{X}} \mathds{1}{\left(q(x)\geq \frac{1}{4}p(x)> g(x)\right)} (q(x) - g(x) + g(x)) \mathrm{d}x\\
\geq~ & \frac{3}{4} - 0.1  - \frac{1}{4} = 0.4,
\end{align*}
where the first term of the right-hand side of the second inequality follows from
\begin{align*}
\int_{\mathcal{X}} \mathds{1}{\left(q(x)\geq \frac{1}{4}p(x)\right)} \cdot q(x) \mathrm{d}x = 1 - \int_{\mathcal{X}} \mathds{1}{\left(q(x)< \frac{1}{4}p(x)\right)} \cdot q(x) \mathrm{d}x
\geq 1- \int_{\mathcal{X}} \frac{1}{4} p(x)\mathrm{d}x=\frac{3}{4}.\end{align*}
And the third inequality follows from
\begin{align*}
\int_{\mathcal{X}} \mathds{1}{\left(q(x)\geq \frac{1}{4}p(x)> g(x)\right)} (q(x) - g(x)) \mathrm{d}x\leq \int_{\mathcal{X}} \mathds{1}{\left(q(x)> g(x)\right)} (q(x) - g(x)) \mathrm{d}x \leq 0.1,
\end{align*}
and
\begin{align*}
\int_{\mathcal{X}} \mathds{1}{\left(q(x)>\frac{1}{4}p(x)> g(x)\right)}  g(x) \mathrm{d}x \leq \int_{\mathcal{X}} \mathds{1}{\left(\frac{1}{4}p(x)> g(x)\right)}  g(x) \mathrm{d}x \leq \int_{\mathcal{X}}\frac{1}{4}p(x)\mathrm{d}x\leq \frac{1}{4}.
\end{align*}

\section{Theoretical Analysis Details}
In this section, we provide proofs of the lemmas and theorem presented in Section~\ref{sec:analysis}.
We repeat the statements of the lemmas and theorem before individual proofs.
We also provide details to further elaborate the discussion provided in \secref{insights} 
of the paper.

We follow the notations introduced in \secref{analysis} of the main text.
In addition, we will use $\log(\cdot)$ to denote $\log_2(\cdot)$ for short.

\subsection{Proof of Lemma~\ref{lem:continuous_total_variation_distance}}\label{sec:proof_lem1}
\lemtotalvar*
\begin{proof}
Since $D_{\textrm{TV}}(G||Q)\leq \gamma$ and
$\int_{\mathcal{X}}q(x)\mathrm{d}x=\int_{\mathcal{X}}g(x)\mathrm{d}x=1$, we know that
\begin{equation}\label{eq:int_dtv}
D_{\text{TV}}(G\parallel Q) = \frac{1}{2}\int_{\mathcal{X}}|q(x)-g(x)|\mathrm{d}x=
\int_{\mathcal{X}} \mathds{1}{(q(x)>g(x))}\cdot (q(x)-g(x))\mathrm{d}x\leq \gamma. 
\end{equation}
Next, we derive a lower bound of $\Pr_{x\sim Q}[x \text{ is $\delta$-covered by } G
\text{ under }P]$:
\begin{align*}
&\Pr_{x\sim Q}[x \text{ is $\delta$-covered by } G \text{ under }P]\\
=~& \int_{\mathcal{X}} \mathds{1}{(g(x)\geq \delta\cdot p(x))} \cdot q(x)\mathrm{d}x
\geq \int_{\mathcal{X}} \mathds{1}{(g(x),q(x)\geq \delta\cdot p(x))} \cdot q(x) \mathrm{d}x\\
=~&\int_{\mathcal{X}} \mathds{1}{(q(x)\geq \delta\cdot p(x))} \cdot q(x)\mathrm{d}x - \int_{\mathcal{X}}\mathds{1}{(q(x)\geq \delta\cdot p(x)>g(x))} \cdot q(x)\mathrm{d}x\\
=~& 1 - \int_{\mathcal{X}} \mathds{1}{(q(x)< \delta\cdot p(x))} \cdot q(x)\mathrm{d}x - \int_{\mathcal{X}}\mathds{1}{(q(x)\geq \delta\cdot p(x)>g(x))} \cdot q(x)\mathrm{d}x\\
\geq~& 1 - \delta\int_{\mathcal{X}}p(x)\mathrm{d}x  - \int_{\mathcal{X}}\mathds{1}{(q(x)\geq \delta\cdot p(x)>g(x))} \cdot q(x)\mathrm{d}x\\
=~& 1 - \delta - \int_{\mathcal{X}}\mathds{1}{(q(x)\geq \delta\cdot p(x)>g(x))} \cdot (q(x) - g(x) + g(x))\mathrm{d}x\\
=~& 1 - \delta - \int_{\mathcal{X}}\mathds{1}{(q(x)\geq \delta\cdot p(x)>g(x))} \cdot (q(x) - g(x))\mathrm{d}x - \int_{\mathcal{X}}\mathds{1}{(q(x)\geq \delta\cdot p(x)>g(x))} \cdot g(x) \mathrm{d}x\\
\geq~& 1 -\delta- \gamma - \int_{\mathcal{X}}\mathds{1}{(q(x)\geq \delta\cdot p(x)>g(x))} \cdot g(x) \mathrm{d}x\\
\geq~& 1- \delta- \gamma - \delta\int_{\mathcal{X}} p(x) \mathrm{d} x= 1-2\delta-\gamma,
\end{align*} 
where the first equality follows from definition, the second equality follows from $\mathds{1}(q(x)\geq \delta\cdot p(x))=\mathds{1}(g(x),q(x)\geq \delta\cdot p(x))+\mathds{1}(q(x)\geq \delta\cdot p(x)>g(x))$,  
the third inequality follows from Equation~\eqref{eq:int_dtv}, and the last inequality follows from
\begin{align*}
\int_{\mathcal{X}} \mathds{1}(q(x)\geq \delta\cdot p(x)>g(x))\cdot g(x)\mathrm{d}x\leq \int_{\mathcal{X}}\mathds{1}(\delta\cdot p(x)>g(x))\cdot g(x)\mathrm{d}x \leq \int_{\mathcal{X}}\delta \cdot p(x) \mathrm{d}x.
\end{align*} 
\end{proof}

\subsection{Proof of Lemma~\ref{lem:continuous_coverage_guarantee}}\label{sec:proof_lem2}
Here we first assume that the probability density $g_t$ of generated samples
is known.
In Appendix~\ref{sec:edf}, we will consider the case where
$g_t$ is estimated by a discriminator as described in Section~\ref{sec:alg}.

\lemcontinuouscoverageguarantee*

\begin{proof}
First, we consider the total weight $W_{t+1}$ after $t$ rounds, we derive the following
upper bound:
\begin{align*}
W_{t+1} & =\int_{\mathcal{X}} w_{t+1}(x) \mathrm{d}x
 = \int_{\mathcal{X}} w_t(x)\cdot (1+ 
 \mathds{1}{(g_t(x)<\delta\cdot p(x))})\mathrm{d}x\\
& = W_t + 
 W_t\cdot \int_{\mathcal{X}} \mathds{1}{(g_t(x)<\delta\cdot p(x))}\cdot \frac{w_t(x)}{W_t}\mathrm{d}x\\
& = W_t + 
W_t\cdot \int_{\mathcal{X}} \mathds{1}{(g_t(x)<\delta\cdot p(x))}\cdot p_t(x)\mathrm{d}x\\
& = W_t + 
W_t\cdot \Pr_{x\sim P_t}[g_t(x)<\delta\cdot p(x)]\\
& = W_t + W_t\cdot (1-\Pr_{x\sim P_t}[g_t(x)\geq\delta\cdot p(x)])\\
& \leq W_t + W_t\cdot (1-(1-\varepsilon))\\
& \leq W_t\cdot (1+
\varepsilon),
\end{align*}
where the first equality follows from definition, the second equality follows from Line 9
of Algorithm~\ref{alg:mwu_training}, the forth equality follows from the construction of distribution $P_t$. 
In addition, the first inequality follows from that $G_t$ can $(\delta,1-\varepsilon)$-cover $(P,P_t).$
Thus, $W_{T+1}\leq W_1\cdot (1+\varepsilon)^T=(1+\varepsilon)^T.$ 

On the other hand, we have
\begin{equation}\label{eq:aa1}
\begin{split}
W_{T+1} & =\int_{\mathcal{X}} w_{T+1}(x) \mathrm{d}x
\geq \int_{\mathcal{S}} w_{T+1}(x) \mathrm{d}x
\geq \int_{\mathcal{S}} 
2^{\sum_{t=1}^T \mathds{1}{(g_t(x)<\delta \cdot p(x))}} p(x)\mathrm{d}x\\
&=\E_{x\sim P}\left[
2^{\sum_{t=1}^T \mathds{1}{(g_t(x)<\delta \cdot p(x))}} \middle|x\in \mathcal{S}\right] \Pr_{x\sim P}[x\in \mathcal{S}],
\end{split}
\end{equation}
where the first equality follows from definition, the first inequality follows from
$\mathcal{S}\subseteq\mathcal{X}$, and the second inequality follows from Line 9 of
Algorithm~\ref{alg:mwu_training}.
Dividing both sides by $\Pr_{x\sim P}[x\in\mathcal{S}]$ of~\eq{aa1} and taking the logarithm yield
\begin{equation} \label{eq:non_cover_exp}
\begin{split}
    \log\left(\frac{W_{T+1}}{\Pr_{x\sim P}[x\in\mathcal{S}]}\right) 
 \geq ~&
 \log\left(\E_{x\sim P}\left[
 2^{\sum_{t=1}^T \mathds{1}{(g_t(x)<\delta \cdot p(x))}} \middle|x\in
 \mathcal{S}\right]\right) \\ 
 \geq ~& \E_{x\sim P}\left[\sum_{t=1}^T \mathds{1}{(g_t(x)<\delta \cdot p(x))} \middle|x\in \mathcal{S}\right],
\end{split}
\end{equation}
where the last inequality follows from Jensen's inequality.

Lastly, we have a lower bound for $\Pr_{x\sim G}[x\in\mathcal{S}]$:
\begin{align*}
\Pr_{x\sim G}[x\in\mathcal{S}]&
=\int_{\mathcal{S}} \frac{1}{T}\sum_{t=1}^T g_t(x) \mathrm{d}x
\geq\int_{\mathcal{S}} \frac{1}{T} \sum_{t=1}^T (\mathds{1}{(g_t(x)\geq \delta\cdot p(x))} \cdot g_t(x))\mathrm{d}x\\
&\geq \int_{\mathcal{S}} \frac{1}{T} \sum_{t=1}^T (\mathds{1}{(g_t(x)\geq \delta\cdot p(x))}\cdot \delta\cdot p(x))\mathrm{d}x\\
&= \frac{\delta}{T} \int_{\mathcal{S}} \sum_{t=1}^T  \mathds{1}{(g_t(x)\geq \delta\cdot p(x))}  \cdot p(x)\mathrm{d}x\\
&= \frac{\delta}{T} \E_{x\sim P}\left[\sum_{t=1}^T \mathds{1}{(g_t(x)\geq \delta\cdot p(x))}\middle|x\in \mathcal{S}\right]\cdot \Pr_{x\sim P}[x \in \mathcal{S}]\\
&= \frac{\delta}{T}  \left(T-\E_{x\sim P}\left[\sum_{t=1}^T \mathds{1}{(g_t(x)<\delta \cdot p(x))} \middle|x\in \mathcal{S}\right]\right)\cdot \Pr_{x\sim P}[x \in \mathcal{S}]\\
&\geq \delta(1-\log
(W_{T+1}/\Pr_{x\sim P}[x\in \mathcal{S}])/T)\cdot\Pr_{x\sim P}[x\in \mathcal{S}]\\
&\geq\delta(1-\varepsilon/\ln 2 - \eta)\cdot \Pr_{x\sim P}[x\in \mathcal{S}],
\end{align*}
where the third inequality follows from Equation~\eqref{eq:non_cover_exp}, while the last
inequality follows from $\log(W_{T+1})\leq \log((1+\varepsilon)^T)\leq \varepsilon T/\ln 2$
and $\Pr_{x\sim P}[x\in \mathcal{S}]=P(\mathcal{S})\geq 1/2^{\eta T}$.
\end{proof}

\subsection{Proof of Theorem~\ref{thm:main}}\label{sec:proof_main}
\maintheorem*

\begin{proof}
From Lemma~\ref{lem:continuous_total_variation_distance}, we have $\forall t\in [T]$,
$G_t$ can $(\delta,1-\gamma-2\delta)$-cover $(P,P_t)$. Combining it with 
Lemma~\ref{lem:continuous_coverage_guarantee}, we have $\forall \mathcal{S}\subseteq \mathcal{X}$
with $P(\mathcal{S})\geq 1/2^{\eta T},$ $G$ can $(1-(\gamma+2\delta)/\ln
2-\eta)\delta$-cover $\mathcal{S}$.
\end{proof}

\subsection{Choice of $T$ and $\delta$ according to \thmref{main}}\label{sec:choice_delta}
Suppose the empirical dataset has $n$ data points independently sampled from a target distribution $P$.
We claim that in our train algorithm, $T=O(\log n)$ suffices.
This is because if a subset $\mathcal{S}\in\mathcal{B}(\mathcal{X})$ has a sufficiently 
small probability measure, for example, $P(\mathcal{S})<1/n^3$, then with a 
high probability (i.e., at least $1-1/n^2$), no data samples in $\{x_i\}_{i=1}^n$
is located in $\mathcal{S}$. In other words, the empirical dataset of size $n$ reveals
almost no information of a subset $\mathcal{S}$ if $P(\mathcal{S})<1/n^3$,
or equivalently if $1/2^{\eta T}\approx 1/n^3$ (according to \thmref{main}).
This shows that $T=O(\log n)$ suffices.

\thmref{main} also sheds some light on the choice of $\delta$ in \algref{mwu_training} 
(and \algref{empirical} in practice).
We now present the analysis details for choosing $\delta$.
We use $\mathcal{G}$ to denote the type of generative models trained in 
each round of our algorithm.
According to Theorem~\ref{thm:main}, if we know $\eta$ (depends on $T$) and $\gamma$
(depends on $\mathcal{G}$), then we wish to maximize the lower bound 
$(1-(\gamma+2\delta)/\ln 2-\eta)\delta$ over $\delta$, and the optimal $\delta$ is $\frac{(1-\eta)\ln 2 -\gamma}{4}$.  
Although in practice $\gamma$ is unknown and not easy to estimate,  
we note that $\gamma$ is relatively small in practice, and $\eta$ can be also small when we increase
the number of rounds $T$. 

Given two arbitrary distributions $P$ and $Q$ over $\mathcal{X}$, 
if the total variation distance between $Q$ and a generated distribution $G$ is at most
$\gamma$ (as we discussed in \secref{game} of the main text), then we have
\begin{align*}
\Pr_{x\sim Q}[g(x)\geq \delta \cdot p(x)]
=~&\int_{\mathcal{X}} \mathds{1}(g(x)\geq \delta p(x))\cdot q(x)\mathrm{d}x\\
\geq~& \int_{\mathcal{X}} \mathds{1}(g(x),q(x)\geq \delta\cdot p(x))\cdot q(x)\mathrm{d}x\\
=~& \int_{\mathcal{X}} \mathds{1}(q(x)\geq \delta\cdot p(x))\cdot q(x)\mathrm{d}x - \int_{\mathcal{X}}\mathds{1}(q(x)\geq \delta\cdot p(x)>g(x))\cdot q(x)\mathrm{d}x\\
\geq~& 1-\delta - \int_{\mathcal{X}} \mathds{1}(q(x)
\geq \delta \cdot p(x)> g(x)) (q(x)-g(x)+g(x))\mathrm{d}x\\
\geq~& 1-\delta-\gamma -\delta = 1-2\delta-\gamma.
\end{align*}
As discussed in Section~\ref{sec:game}, we can find a mixture of
generators satisfying pointwise $(1-2\delta-\gamma)\delta$-coverage.  Letting
$\gamma = 0$, we see that the optimal choice of $\delta$ in this setting is $1/4$. 
And in this case, $(1-2\delta)\delta=1/8$ is a theoretical bound of the coverage ratio by our algorithm.

\subsection{Use of Estimated Probability Density $g_t$}\label{sec:edf}
In Algorithm~\ref{alg:mwu_training}, we use a discriminator $D_t$ to estimate 
the probability density $g_t$ of generated samples of each generator $G_t$.
The discriminator $D_t$ might not be perfectly trained, causing inaccuracy of estimating $g_t$.
We show that the pointwise lower-bound in our data coverage is retained if 
two mild conditions are fulfilled by $D_t$.
\begin{enumerate}[itemsep=0.5pt,leftmargin=11pt]
    \item In each round, only a bounded fraction of covered data points
$x$ (i.e., those with $g_t(x)\ge\delta\cdot p(x)$) is falsely classified
and their weights are unnecessarily doubled.
Concretely, $\forall t\in[T],$ if a sample $x$ is drawn from
distribution $P_t$, then the probability of both events---$x$ is $\delta$-covered by
$G_t$ under $P$ and $\left(\frac{1}{D_t(x)}-1\right)\cdot
\frac{w_1(x)}{p(x)W_t}<\delta$---happening is bounded by $\varepsilon'$.  
\item For any data point
$x\in\mathcal{X}$, if in round $t$, the weight of $x$ is not doubled, then with a good
chance, $x$ is really $\delta'$-covered, where $\delta'$ can be smaller than $\delta$.
Formally, $\forall x\in\mathcal{X}, \left|\{t\in[T]\middle| g_t(x)\geq \delta'\cdot
p(x)\}\right|\geq
\lambda\cdot \left|\left\{t\in[T]\middle| \left(\frac{1}{D_t(x)}-1\right)\cdot
\frac{w_t(x)}{p(x)W_t}\geq \delta\right\}\right|$. Because
$\left(\frac{1}{D_t(x)}-1\right)\cdot \frac{w_t(x)}{p(x)W_t}<\delta$ happens if and only
if $w_{t+1}(x)= 2\cdot w_t(x)$, we use the event $w_{t+1}(x)=2\cdot w_t(x)$ as an
indicator of the event $\left(\frac{1}{D_t(x)}-1\right)\cdot \frac{w_t(x)}{p(x)W_t} < \delta$.
\end{enumerate}

If the condition (1) is satisfied, then we are able to upper bound the total weight
$W_{T+1}.$ Similarly to the proof of Lemma~\ref{lem:continuous_coverage_guarantee},
this can be seen from the following derivation:
\begin{align*}
W_{t+1} & =\int_{\mathcal{X}} w_{t+1}(x) \mathrm{d}x\\
& \leq \int_{\mathcal{X}} w_t(x)\cdot (1+ 
\mathds{1}{(g_t(x)<\delta\cdot p(x))}+\mathds{1}(g_t(x)\geq \delta\cdot p(x)\wedge w_{t+1}(x)= 2w_t(x)))\mathrm{d}x\\
& = W_t + 
W_t\cdot \int_{\mathcal{X}} (\mathds{1}{(g_t(x)<\delta\cdot p(x))}+\mathds{1}(g_t(x)\geq \delta\cdot p(x)\wedge w_{t+1}(x)= 2w_t(x)))\cdot \frac{w_t(x)}{W_t}\mathrm{d}x\\
& = W_t + 
W_t\cdot \int_{\mathcal{X}} (\mathds{1}{(g_t(x)<\delta\cdot p(x))}+\mathds{1}(g_t(x)\geq \delta\cdot p(x)\wedge w_{t+1}(x)= 2w_t(x)))\cdot p_t(x)\mathrm{d}x\\
& = W_t + 
W_t\cdot \Pr_{x\sim P_t}[g_t(x)<\delta\cdot p(x)]+W_t\cdot\Pr_{x\sim P_t}[g_t(x)\geq \delta\cdot p(x)\wedge w_{t+1}(x)= 2w_t(x)]\\
& \leq W_t + W_t\cdot (1-\Pr_{x\sim P_t}[g_t(x)\geq\delta\cdot p(x)]) +W_t\cdot \varepsilon'\\
& \leq W_t + W_t\cdot (1-(1-\varepsilon))+W_t\cdot \varepsilon'\\
& \leq W_t\cdot (1+
\varepsilon+\varepsilon'),
\end{align*}
Thus, the total weight $W_{T+1}$ is bounded by $(1+\varepsilon+\varepsilon')^T.$
Again in parallel to the proof of Lemma~\ref{lem:continuous_coverage_guarantee},
we have 
\begin{align*}
W_{T+1} & =\int_{\mathcal{X}} w_{T+1}(x) \mathrm{d}x
\geq \int_{\mathcal{S}} w_{T+1}(x) \mathrm{d}x
\geq \int_{\mathcal{S}} 
2^{\sum_{t=1}^T \mathds{1}{(w_{t+1}(x)=2\cdot w_t(x))}} p(x)\mathrm{d}x\\
&=\E_{x\sim P}\left[
2^{\sum_{t=1}^T \mathds{1}{(w_{t+1}(x)=2\cdot w_t(x))}} \middle| x\in \mathcal{S}\right] \Pr_{x\sim P}[x\in \mathcal{S}].
\end{align*}
Dividing both sides by $\Pr_{x\sim P}[x\in\mathcal{S}]$ yields
\begin{align*}
\log\left(\frac{W_{T+1}}{\Pr_{x\sim P}[x\in\mathcal{S}]}\right) 
& \geq 
\log\left(\E_{x\sim P}\left[
2^{\sum_{t=1}^T \mathds{1}{(w_{t+1}(x)=2w_t(x))}} \middle| x\in \mathcal{S}\right]\right) \notag\\
& \geq \E_{x\sim P}
\left[\sum_{t=1}^T \mathds{1}{(w_{t+1}(x)=2w_t(x))} \middle| x\in \mathcal{S}\right].
\end{align*}
Meanwhile, if the condition (2) is satisfied, then
\begin{multline}
\lambda\cdot\left(T-\E_{x\sim P}\left[\sum_{t=1}^T \mathds{1}{(w_{t+1}(x)=2w_t(x))}
\middle| x\in \mathcal{S}\right]\right)\leq \\ T - \E_{x\sim P}\left[\sum_{t=1}^T
\mathds{1}{(g_t(x)<\delta' \cdot p(x))} \middle|x\in \mathcal{S}\right]. \label{eq:different_part}
\end{multline}
Following the proof of Lemma~\ref{lem:continuous_coverage_guarantee},
we obtain 
\begin{align*}
\Pr_{x\sim G}[x\in\mathcal{S}]&
=\int_{\mathcal{S}} \frac{1}{T}\sum_{t=1}^T g_t(x) \mathrm{d}x
\geq\int_{\mathcal{S}} \frac{1}{T} \sum_{t=1}^T (\mathds{1}{(g_t(x)\geq \delta'\cdot p(x))} \cdot g_t(x))\mathrm{d}x\\
&\geq \int_{\mathcal{S}} \frac{1}{T} \sum_{t=1}^T (\mathds{1}{(g_t(x)\geq \delta'\cdot p(x))}\cdot \delta'\cdot p(x))\mathrm{d}x\\
&= \frac{\delta'}{T} \int_{\mathcal{S}} \sum_{t=1}^T  \mathds{1}{(g_t(x)\geq \delta'\cdot p(x))}  \cdot p(x)\mathrm{d}x\\
&= \frac{\delta'}{T} \E_{x\sim P}\left[\sum_{t=1}^T \mathds{1}{(g_t(x)\geq \delta'\cdot p(x))}\middle|x\in \mathcal{S}\right]\cdot \Pr_{x\sim P}[x \in \mathcal{S}]\\
&= \frac{\delta'}{T}  \left(T-\E_{x\sim P}\left[\sum_{t=1}^T \mathds{1}{(g_t(x)<\delta' \cdot p(x))} \middle|x\in \mathcal{S}\right]\right)\cdot \Pr_{x\sim P}[x \in \mathcal{S}]\\
&\geq \frac{\delta'\lambda}{T} \left(T-\E_{x\sim P}\left[\sum_{t=1}^T \mathds{1}{(w_{t+1}(x)=2w_t(x))} \middle|x\in \mathcal{S}\right]\right)\\
&\geq \delta'\lambda(1-\log
(W_{T+1}/\Pr_{x\sim P}[x\in \mathcal{S}])/T)\cdot\Pr_{x\sim P}[x\in \mathcal{S}]\\
&\geq\delta'\lambda(1-(\varepsilon+\varepsilon')/\ln 2 - \eta)\cdot \Pr_{x\sim P}[x\in \mathcal{S}],
\end{align*}
where the third inequality follows from Equation~\eqref{eq:different_part}, and other steps are similar to the proof in Lemma~\ref{lem:continuous_coverage_guarantee}.
By combining with Lemma~\ref{lem:continuous_total_variation_distance}, the final coverage ratio of Theorem~\ref{thm:main} with imperfect discriminators $D_t$ should be $(1-(\gamma+2\delta+\varepsilon')/\ln 2-\eta)\delta'\lambda$.

\subsection{Discussion on Generalization}\label{sec:generalization}
Recently, Arora et al.~\cite{arora2017generalization} proposed the \emph{neural net
distance} for measuring generalization performance of GANs.  However, their metric still
relies on a global distance measure of two distributions, not necessarily reflecting the
generalization for pointwise coverage.



While a dedicated answer of this theoretical question is beyond the scope of
this work, here we propose our notion of generalization and briefly discuss its
implication for our algorithm.
Provided a training dataset consisting of $n$ i.i.d. samples $\{x_i\}_{i=1}^n$ drawn from the
distribution $P$, we train
a mixture of generators $\bm{G}^*$. 
Our notion of generalization is defined as 
$\Pr_{x\sim P}[x\text{ is $\psi$-covered by } \bm{G}^*]$,
the probability of $x$ being $\psi$-covered by empirically trained $\bm{G}^*$ when 
$x$ is sampled from the true target distribution $P$.
A perfect generalization has a value 1 under this notion. 
We claim that 
given fixed $T$ rounds of our algorithm and a constant $\varepsilon\in(0,1)$,
if $G_t$ in each round is from a family $\mathcal{G}$ of generators
(e.g., they are all GANs with the same network architecture), and if $n$ is at
least $\Omega(\varepsilon^{-1}T\log |\mathcal{G}|)$, then we have the
generalization $\Pr_{x\sim P}[x\text{ is $\psi$-covered by } \bm{G}^*]\geq
1-\varepsilon$.
Here $|\mathcal{G}|$ is the
size of \emph{essentially} different generators in $\mathcal{G}$. Next, 
we elaborate this statement.

\paragraph{Generalization Analysis.}
Our analysis start with a definition of a \emph{family} of generators.
In each round of our algorithm, we train a generator $G_t$. We now identify a
family of generators from which $G_t$ is trained.  In general, a generator $G$
can be viewed as a pair $(f(\cdot),Z)$, where $Z$ is the latent space
distribution (or prior distribution) over the latent space $\mathcal{Z}$, and
$f(\cdot)$ is a transformation function that maps the latent space
$\mathcal{Z}$ to a target data domain $\mathcal{X}$. 
Let $z$ be a random variable of distribution $Z$. 
Then, the generated distribution (i.e., distribution of samples generated by $G$) 
is denoted by the distribution of $f(z)$.
For example, for GANs~\cite{goodfellow2014generative} and VAEs~\cite{kingma2013auto}, 
$f(\cdot)$ is a function represented by a neural network, and $Z$ is usually a standard Gaussian or mixture of Gaussians.

In light of this, we define a family $\mathcal{G}$ of generators 
represented by a pair $(\mathcal{F},Z)$, where $\mathcal{F}$ is a set of
functions mapping from $\mathcal{Z}$ to $\mathcal{X}$. 
For example, in the framework of GANs, $\mathcal{F}$ can be expressed by a
neural network with a finite number of parameters which have bounded values. 
If the input to the neural network (i.e., the latent space) is also bounded,
then we are able to apply \emph{net} argument (see e.g., \cite{arora2017generalization}) to find a finite subset $\mathcal{F}'\subset
\mathcal{F}$ such that for any $f\in\mathcal{F}$, there exists a function
$f'\in\mathcal{F'}$ sufficiently close to $f$. 
Then the size of $\mathcal{F'}$, denoted by $|\mathcal{F'}|$, can be regarded 
as the number of ``essentially different'' functions (or neural networks). 

Recall that the generator family $\mathcal{G}$ can be represented by $(\mathcal{F},Z)$. If
the latent space $Z$ is fixed (such as a single Gaussian), then we can
define ``essentially different'' generators in a way similar 
to the definition of ``essentially different'' functions in $\mathcal{F}$.
If the number of ``essentially different'' generators from
$\mathcal{G}$ is finite, we define the size of $\mathcal{G}$
as $|\mathcal{G}|$.



With this notion, the number of different mixture of generators $\bm{G}^*=\{G_1,...,G_T\}$ is at most $|\mathcal{G}|^T$.
Consider a uniform mixture $\bm{G}^*$ of generators,
$G_1,G_2,\cdots,G_T\in \mathcal{G}$. If $\Pr_{x\sim P}[x\text{ is \textit{not}
$\psi$-covered by }\bm{G}^*]\geq \varepsilon,$ then for $n$ i.i.d. samples
$x_1,x_2,\cdots,x_n\sim P$, the probability that every $x_i$ is $\psi$-covered
by $G$ is at most $(1-\varepsilon)^n$, that is, 
\begin{equation*}
\Pr_{x_1,...,x_n\sim P}\left[\text{every }x_1,...,x_n\text{ is $\psi$-convered by }\bm{G}^*\right] \le (1-\varepsilon)^n.
\end{equation*}
Next, by union bound over all possible mixtures
$\bm{G}^*$ that satisfies $\Pr_{x\sim P}[x\text{ is not $\psi$-covered by }\bm{G}^*]\geq \varepsilon$,
we have the following probability bound:
\begin{multline}\label{eq:fff}
\Pr_{x_1,...,x_n\sim P}\left[\exists\bm{G}^*\text{s.t. }\Pr_{x\sim P}\left[x \text{ is \emph{not} $\psi$-covered by }\bm{G}^*\right]\geq \varepsilon\,
\text{and every }x_1,...,x_n\text{ is $\psi$-convered by }\bm{G}^*\right] \\ \le (1-\varepsilon)^n |\mathcal{G}|^T.
\end{multline}
Thus, if $n\geq \Omega(\varepsilon^{-1}T\log|\mathcal{G}|)$, then with a high probability, 
the inverse of the probability condition above is true, because in this case $(1-\varepsilon)^n$ on the right-hand side of~\eq{fff} is 
small---that is, with a high probability,
for any mixture $\bm{G}^*$ that satisfies $\Pr_{x\sim P}[x\text{ is \emph{not} $\psi$-covered by }G]\geq
\varepsilon,$ there must exist a sample $x_i$ such that $x_i$ cannot be $\psi$-covered by $\bm{G}^*$.
The occurrence of this condition implies that 
if we find a generator mixture $\bm{G}^*$ that can $\psi$-cover every $x_i$, then
$\Pr_{x\sim P}[x\text{ is $\psi$-covered by } G]\geq 1-\varepsilon$. 
In other words, we conclude that
if we have $n\geq \Omega(\varepsilon^{-1}T\log |\mathcal{G}|)$ i.i.d. samples
$\{x_i\}_{i=1}^n$ drawn from the distribution $P$, and if our algorithm finds
a mixture $\bm{G}^*$ of generators that can $\psi$-cover every $x_i$, then with a high probability, 
our notion of generalization has $\Pr_{x\sim P}[x\text{ is $\psi$-covered by } G]\geq 1-\varepsilon$. 

\section{Experiment Details and More Results}\label{sec:exp_discussion}

\subsection{Network Architecture and Training Hyperparameters.}
In our tests, we construct a mixture of GANs.
The network architecture of the GANs in show in
\tabref{structure1} for experiments on synthetic datasets and in \tabref{structure2}
for real image datasets. 
All experiments use Adam optimizer~\cite{kingma2014adam}
with a learning rate of $10^{-3}$, and we set $\beta_1=0.5$ and $\beta_2=0.999$
with a mini-batch size of 128.

\begin{table}[h]
    \small
    \centering
    \begin{tabular}{ccccc}
    \whline{1.0pt}
         & layer 		   & output size & activation function  \\
    \whline{0.7pt}
         & input (dim 10)  &  10	     &      \\
         & Linear &           32 	     & ReLU  \\
         & Linear &           32 	     & ReLU  \\
         & Linear &           2 	     &   \\
    \whline{1.0pt}
    \end{tabular}
    \vspace{1mm}
    \caption{\textbf{Network structure} for synthetic data generator. } \label{tab:structure1}
    \vspace{-2mm}
\end{table}

\begin{table}[h]
    \small
    \centering
    \begin{tabular}{ccccccc}
    \whline{1.0pt}
         & layer 		   & output size & kernel size & stride & BN  & activation function  \\
    \whline{0.7pt}
         & input (dim 100)  &  100$\times$1$\times$1 	 &              &        &     &      \\
         & Transposed Conv &  512$\times$4$\times$4 	 & 4 		& 1 	 & Yes & ReLU  \\
         & Transposed Conv &  256$\times$8$\times$8 	 & 4  		& 2 	 & Yes & ReLU  \\
         & Transposed Conv &  128$\times$16$\times$16    & 4 		& 2 	 & Yes & ReLU  \\
         & Transposed Conv &  channel$\times$32$\times$32  & 4 		& 2 	 & No  & Tanh  \\
    \whline{1.0pt}
    \end{tabular}
    \vspace{1mm}
            \caption{\textbf{Network structure} for image generator. channel=3 for Stacked
            MNIST and channel=1 for FasionMNIST+MNIST.} \label{tab:structure2}
    \vspace{-2mm}
\end{table}

\subsection{Additional Experiment Details on Real Data}

\paragraph{Stacked MNIST dataset.} 
Stacked MNIST is an augmentation of MNIST dataset~\cite{che2016mode} for
evaluating mode collapse.  We randomly sample three images from MNIST dataset
and stack them in RGB channels of an image.  
In this way, we construct a dataset of 100k images, each of which has a dimension of
$32\times 32\times 3$. 

\paragraph{Pre-trained classifier.} 
For Fashion-MNIST with partial MNIST dataset, we use all the training data of
Fashion-MNIST and MNIST to train a 11-class classifier. For stacked MNIST dataset, we
train a 10-class classifier on MNIST, and use it as a 1000-class classifier
on stacked MNIST (by applying the 10-class MNIST classifier on each color channel). For
each experiment, we regard each class as a mode, and use the 
pre-trained classifier to classify the generated samples into individual modes.
After classifying generated samples, we can estimate the generation probability for each mode. 


\subsection{Comparison with AdaGAN on Synthetic Dataset and Stacked MNIST}\label{sec:exp_discussion_adagan}

\begin{figure}[t]
	\centering
	\includegraphics[width=0.5\textwidth]{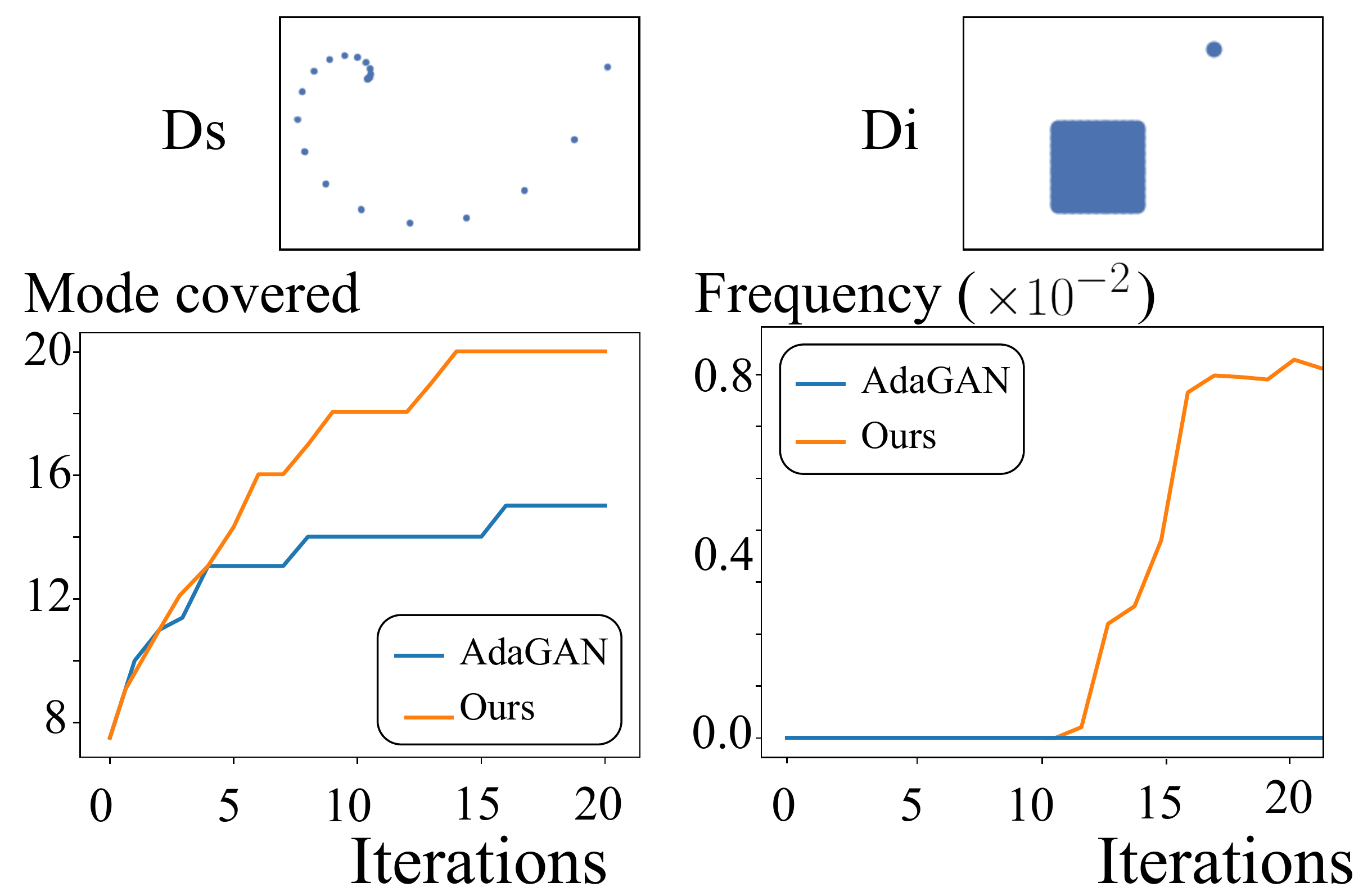}
	\vspace{-2mm}
        \caption{ \textbf{Challenging datasets.} 
	We compare our method with AdaGAN on two datasets (top). 
        (left) Our method covers all modes in $\mathsf{D}_s$ dataset with 20 iterations in average. 
        (right) Our method increases the sampled frequency (sampling weights) 
        of the separate mode as the training iteration progresses, whereas 
        AdaGAN increases the sampling frequency of the separated modes.
        Eventually, AdaGAN can only cover 14 modes in $\mathsf{D}_s$ and never
        cover the separated mode in $\mathsf{D}_i$.  In contrast, our method
        successfully covers all modes.
	}\label{fig:spiral}
\end{figure}

 \begin{figure}[t]
\centering
    \includegraphics[width=0.4\columnwidth]{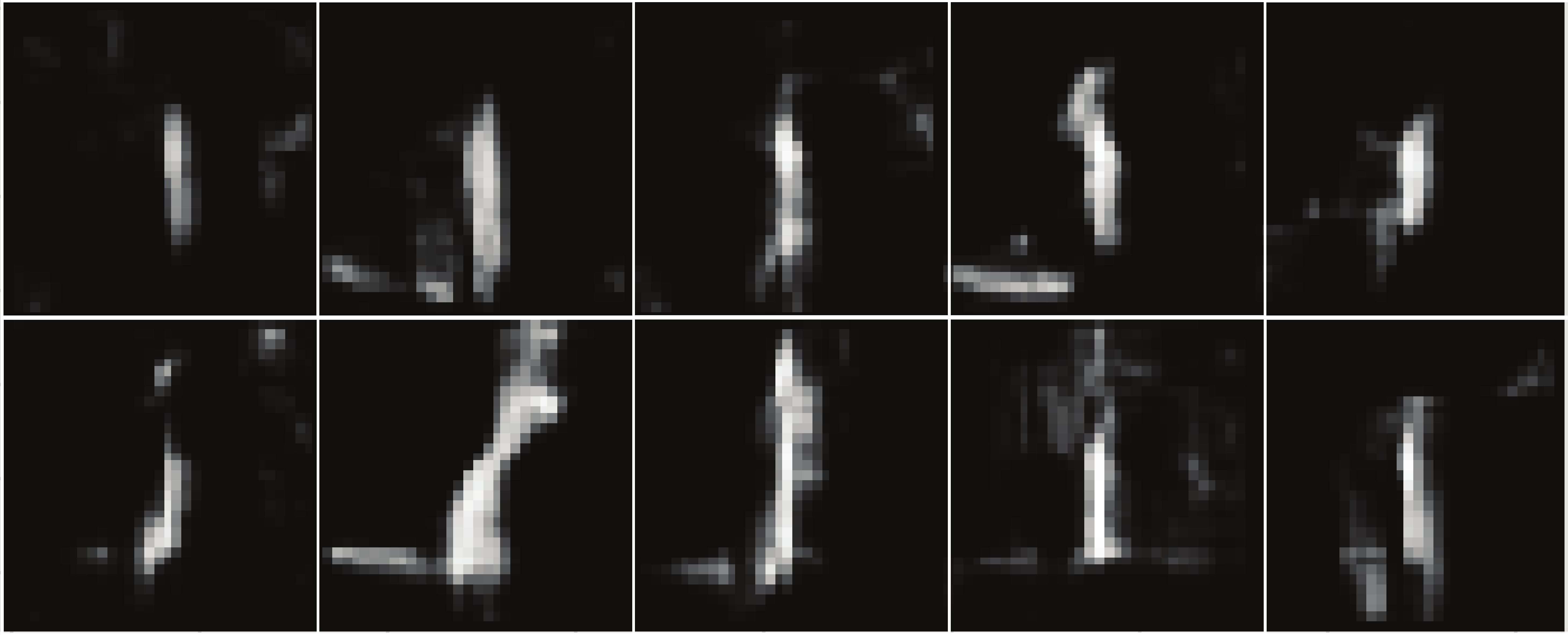}
    \vspace{-2mm}
        \caption{ \textbf{Sampled ``1'' images by a single generator.} 
        Based on the observation we draw from \figref{totalweightsratio},
        we train a single GAN using 60k Fashion-MNIST images together with $300$ MNIST ``1''
        images, and the GAN is able to generate images close to ``1''. 
        Here we show a few generated images from the resulting generator.
    }\label{fig:single_DC_highconf}
\end{figure}

\paragraph{Mixture of Gaussians and Stacked MNIST.}  
We conduct experiments on the same synthetic dataset and Stacked MNIST as used in AdaGAN~\cite{tolstikhin2017adagan}.
All synthetic data points are distributed on a 2D plane, consisting of $M$ ($M=10$)
Gaussians uniformly sampled in a squared region $[-15,15]\times[-15,15]$, 
all with the same variance $\sigma_0^2=0.05$. 

We evaluate our algorithm by checking how many iterations (i.e., the parameter
$T$ in \algref{mwu_training}) it takes to cover all modes, and compare it with AdaGAN.
A mode is considered covered, if in $N$ generated samples, there exist at least
$0.01\cdot N/M$ samples landed within a distance $3\sigma_0$ away from the mode's center.
The experiments on both our algorithms and AdaGAN are repeated 15 times. 
On this synthetic
dataset, both our algorithm and AdaGAN can cover all modes in 2 iterations.
For Stacked MNIST, both our method and AdaGAN can cover all modes in 5 iterations.


\paragraph{More challenging synthetic datasets.}
Furthermore, we test our method and AdaGAN on two other synthetic datasets that have more challenging mode
distributions.  
The first one, referred as $\mathsf{D}_s$, has 20 modes
distributed along a spiral curve (see \figref{spiral}-left). 
Each mode is a set of points following a Gaussian distribution (with a variance of 1).
The center of $i$-th mode ($i=1..20$) is located at
$(\cos(i / 3) \cdot i \cdot i, \sin(i / 3) \cdot i \cdot i)$. 
The second dataset, referred as $\mathsf{D}_i$, has $21\times 21 + 1$ modes, among which
$21\cdot 21=441$ modes locate on a $[-10,10]\times[-10,10]$ uniform grid and
one additional mode is isolated at $(100,100)$ (see \figref{spiral}-right). 
Each mode is also a set of points under a Gaussian distribution (with a variance of 0.05).

For both datasets, we evaluate how many modes are covered as the number of 
iterations increases in both our method and AdaGAN.  The mode coverage is defined in the
same way as in the previous experiment.  As shown in \figref{spiral}, our
algorithm covers all the modes, and outperforms AdaGAN on both datasets.
In terms of efficiency, AdaGAN takes 437 min (25 iterations) and still miss
some modes, while our method takes only 134 min (9 iterations) to cover all modes.

%
%
%
%

\end{document}